\providecommand{\SetAlgoLined}{\SetLine}
\begin{document}
%
\title{A Convex Sparse PCA for Feature Analysis}
%
%
%
%

\author{Xiaojun Chang,
        Feiping Nie,
        Yi Yang,
        and~Heng~Huang
\IEEEcompsocitemizethanks{\IEEEcompsocthanksitem Xiaojun Chang and Yi Yang are with School of Information Technology and Electrical Engineering, The University of Queensland, Australia. (E-mail: x.chang@uq.edu.au; yi.yang@uq.edu.au).\protect\\

\IEEEcompsocthanksitem Feiping Nie and Heng Huang are with Department of Computer Science and Engineering, University of Texas at Arlington. (E-mail: feipingnie@gmail.com, heng@uta.edu).}
\thanks{}}

%
%

\markboth{Journal of \LaTeX\ Class Files,~Vol.~6, No.~1, January~2007}%
{Shell \MakeLowercase{\textit{et al.}}: Bare Demo of IEEEtran.cls for Computer Society Journals}
%


\IEEEcompsoctitleabstractindextext{%
\begin{abstract}
Principal component analysis (PCA) has been widely applied to dimensionality reduction and data pre-processing for different applications in engineering, biology and social science. Classical PCA and its variants seek for linear projections of the original variables to obtain a low dimensional feature representation with maximal variance. One limitation is that it is very difficult to interpret the results of PCA. In addition, the classical PCA is vulnerable to certain noisy data. In this paper, we propose a convex sparse principal component analysis (CSPCA) algorithm and apply it to feature analysis. First we show that PCA can be formulated as a low-rank regression optimization problem. Based on the discussion, the $l_{2,1}$-norm minimization is incorporated into the objective function to make the regression coefficients sparse, thereby robust to the outliers. In addition, based on the sparse model used in CSPCA, an optimal weight is assigned to each of the original feature, which in turn provides the output with good interpretability. With the output of our CSPCA, we can effectively analyze the importance of each feature under the PCA criteria. The objective function is convex, and we propose an iterative algorithm to optimize it.  We apply the CSPCA algorithm to feature selection and  conduct extensive experiments on six different benchmark datasets. Experimental results demonstrate that the proposed algorithm outperforms state-of-the-art unsupervised feature selection algorithms.
\end{abstract}

\begin{keywords}
Principal Component Analysis, Convex PCA, Sparse PCA, Feature Analysis
\end{keywords}}

\maketitle

\IEEEdisplaynotcompsoctitleabstractindextext

%
\IEEEpeerreviewmaketitle

\section{Introduction}
In many machine learning and data mining applications, such as face recognition \cite{eigenfaces} \cite{PCAFR}, conceptual indexing \cite{ConceptualIndexing}, collaborative filtering \cite{collaborativefiltering}, the dimensionality of the input data is usually very high. It is computationally expensive to analyze the high-dimensional data directly. Meanwhile, the noise in a representation may dramatically increase as the dimensionality is getting high \cite{lineardp} \cite{fsclustering} \cite{vpca}. To improve the efficiency and accuracy, researchers have demonstrated that dimensionality reduction is one of the most effective approaches for data analysis, and plays a significant role in data mining. Because of its simplicity and effectiveness, Principal Component Analysis (PCA) has been widely applied to various applications. The goal of PCA is to find a projection matrix that maximizes the variance of the samples after the projection, while preserving the structure of the original dataset as much as possible.

PCA seeks a linear projection for the original high-dimensional feature vectors so as to obtain a low dimensional representation of data, which captures as much information as possible. One may obtain principal components (PCs) by performing singular value decomposition (SVD) of the original data matrix and choose the first $k$ PCs to represent the data, which is a more compact feature representation. There are two main reasons why PCA usually obtains good performance in the real world applications: (1) all the PCs are uncorrelated; (2) minimal information loss is guaranteed by the fact that PCs sequentially capture maximum variability among columns of data matrix. Nevertheless, PCA still has some inherent drawbacks, which this paper will address.

One problem of the classical PCA is that each PC is obtained by a linear combination of original variables and loadings are normally non-zero, which makes it often difficult to interpret the results. To address this problem, Hui Zou etal. integrate the lasso penalty \cite{lasso}, which is a variable selection technique, into the regression criterion in \cite{spca}. In their paper, they propose a new approach for estimating PCs with sparse loadings, sparse principal component analysis (SPCA). Lasso penalty is implemented via elastic net, which is a generalization of lasso proposed in \cite{elastic}. However, their algorithm is non-convex and it is difficult to obtain the global optima. Thus the performance may vary dramatically with different local optima.

Another drawback of the classical PCA methods is that they are least square estimation approaches, which are commonly known not to be robust in the sense that outlying measurements can arbitrarily skew the solution from the desired solution \cite{robust}. To make PCA robust to outliers, Xu etal. \cite{rpcaso} propose to recover a low-rank matrix from highly corrupted measurements. It has been experimentally demonstrated in \cite{robust} that robust PCA gains promising performance on noisy data analysis. However, despite of its robustness to the outliers, the algorithm proposed in \cite{robust} is transductive, and is not able to deal with the out-of-sample data which are unseen during the training phrase. It is very restrictive to have all the data beforehand. Therefore, the robust PCA algorithm proposed in \cite{rpcaso} is less practical for many real world application.

In this paper, we propose a novel convex sparse PCA for feature analysis. It has been demonstrated in \cite{spca} that the sparse model is a good measure for feature analysis, especially for feature weighting. We therefore impose the $l_{2,1}$-norm on the regression coefficient so as to make our algorithm able to evaluate the importance of each feature. Besides, we adopt the $l_{2,1}$-norm based loss function, which is robust to the outliers, to achieve robust performance. Different from \cite{robustPCAMAYI}, our algorithm is inductive and can be directly used to map the unseen data which are outside the training set. We name the proposed algorithm Convex Sparse PCA (CSPCA). 
The main contributions of this paper can be summarized as follows:

\begin{enumerate}
\item We have theoretically proved the equivalence of the classical PCA and low rank regression.

\item The proposed algorithm combines the recent advances of sparsity and robust PCA into a joint framework to leverage the mutual benefit. To the best of our knowledge, this is the first convex sparse and robust PCA algorithm, which ensures our algorithm always achieves the global optima.

\item Different from the existing robust PCA algorithms \cite{robustPCAMAYI} \cite{RPCAOP}, which can only deal with the in-sample data, our algorithm is capable of mapping the data which are unseen during the training phase.

\item We propose an effective iterative algorithm to optimize the objective function, which simultaneously optimizes the $l_{2,1}$-norm minimization and the trace norm minimization.

\end{enumerate}

The rest of this paper is organized as follows. We briefly review related work on PCA, sparse PCA and robust PCA in Section 2. Then we elaborate the formulation of our method in Section 3, followed by the proposed solution in Section 4. Extensive experiments are conducted in Section 5 to evaluate performance of the proposed algorithm. Section 6 concludes this paper.
\section{Related Work}

In this section, we briefly review three related topics of our work, including the classical PCA, sparse PCA and robust PCA.

To begin with, we first define the terms and notations which will be frequently used in this paper. (1) data matrix denoted by $X = [x_1, x_2, \cdots , x_n]$ where $x_i \in \mathbb{R}^d ( 1 \leq i \leq n)$ is the $i$-th datum and $n$ is the total number of the samples; (2) projection matrix denoted by $W$; (3) the Frobenius norm denoted by $\|X\|_F$; (4) the trace norm denoted by $\|W\|_*$.

\subsection{The Classical PCA}

The classical PCA is a statistical technique for dimensionality reduction. Classical PCA techniques, also known as Karhunen-Loeve methods, look for a dimensionality reducing linear projection that maximizes the total scatter of all projected data points. To be more specific, PCA computes the PCs by performing eigen-value decomposition of covariance of the convariance matrix of all training data. In general, the entries of corresponding PCs are dense and non-zero. The objective function of classical PCA is 

\begin{equation}\nonumber
\max_{W^TW=I} Tr(W^tXXW),
\end{equation}
where $Tr(\cdot)$ denotes trace operator.

\subsection{Sparse PCA}

A common limitation of the classical PCA is the lack of interpretability. All principal components are a linear combination of variables and most of the factor coefficients are non-zero. To get more interpretable results, sparse PCA is proposed, which leads to reduced computation time and improved generalization. There are numerous implementations of sparse PCA in the literature \cite{spcasemi} \cite{fullregSPCA} \cite{spectralSPCA} \cite{spcasemi} \cite{lassospca} \cite{sparsepcalowrank}. The objectives of all the methods aim to reduce the dimensionality reduction and the number of explicitly used variables. A straightforward way is to manually set factor coefficients with values below a threshold to zero. This simple and naive thresholding method is often adopted in various applications. Nevertheless, it could be potentially misleading in different aspects. Jolliffe etal. propose SCoTLASS to obtain modified principal components with possible zero factor coefficients \cite{lassospca}. Lasso \cite{lasso} has shown to be a effective variable selection method, which has been shown effective in a variety of applications. To further improve lasso, Zou etal. propose the elastic net in \cite{elastic} for sparsity based mining. Based on the fact that PCA can be reformulated as regression-type optimization problem, Zou etal. \cite{spca} propose sparse PCA (SPCA) for estimating PCs with sparse factor coefficients, which can be formulated as follows:

\begin{equation}\nonumber
\begin{aligned}
&\min_{A, B} \sum_{i=1}^n \|x_i - AB^Tx_i\|^2 + \lambda \sum_{j=1}^k \|\beta _j \|^2 + \sum_{j=1}^k \lambda _{1,j} \|\beta _j \|_1, \\
& ~~~ s.t. ~A^TA=I
\end{aligned}
\end{equation}
where $\beta$ is lasso estimates. All $k$ components share the same $\lambda$ and different $\lambda_{1,j}$'s are allowed for penalizing the loadings of different principal components. Although the algorithm has good performance and attracted more and more attention, it is non-convex and difficult to find the global optima.

\subsection{Robust PCA}

The goal of robust PCA is to recover a low-rank matrix $D$ from highly corrupted measurements $X = D + E$. The errors $E$ are supposed to be sparsely supported. Motivated by recent research on the robust solution of over-determined linear systems of equations in the presence of arbitrary but sparse errors and computing low-rank matrix solutions to underdetermined linear equations, John etal. \cite{robustPCAMAYI} propose exact recovery of corrupted low-rank matrices by convex optimization. A straightforward solution to robust PCA is to seek the matrix with the lowest rank that could have generated the data under the constraint of sparse errors. The objective function of robust PCA is formulated as follows:

\begin{equation}
\label{robustPCA1}
\min_{D} \|X - D \|_0 + \gamma rank(D)
\end{equation}

However, since Eq. \eqref{robustPCA1} involves $l_0$-norm, the objective function is highly non-convex and it is difficult to find an efficient solution. To obtain a tractable optimization problem, it is nature to replace the $l_0$-norm with $l_1$-norm and the rank with the trace norm. The objective function can be rewritten as:

\begin{equation}
\min_{D} \|X - D\|_1 + \gamma \|D\|_*
\end{equation}

To make the objective function robust to outliers, we further replace $l_1$-norm with $l_{2,1}$-norm as $l_{2,1}$-norm is indicated to make the objective function robust to outliers in \cite{RPCAOP}. The objective function arrives at:

\begin{equation}
\min_{D} \|X - D\|_{2,1} + \gamma \|D\|_*
\end{equation}

Although the robust PCA has attracted much research attention in recent years, it still has a major limitation. As the robust PCA is transductive, despite of its good performance, it cannot be applied to out-of-sample problems. In other words, it cannot map the data, which are outside the training set, into the low dimensional subspace.

\section{The Proposed method}

In this section, we first demonstrate the equivalence of PCA and regression, followed by illustrating the formulation of the convex sparse PCA method. Then we describe a detailed approach to solve the objective function.

\subsection{The Equivalence of Classical PCA and Regression}

The proposed CSPCA is designed upon our recent finding that the classical PCA can be reformulated as a regression problem. This conclusion provides us with new insights of PCA in a different perspective, and enables us to design the new convex sparse PCA algorithm. We begin the following theorem.

\newtheorem{theorem}{Theorem}
\begin{theorem}
The classical PCA can be reformulated as a low-rank regression optimization problem as follows:
\begin{equation}
\min_{rank(W)=k} \|W^TX - X\|_F^2
\label{obj1}
\end{equation}
\end{theorem}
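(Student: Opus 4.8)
The plan is to prove the equivalence by showing that both the classical PCA objective and the regression objective in \eqref{obj1} are solved by the \emph{same} low-rank operator, namely the orthogonal projection onto the top-$k$ principal subspace of $X$. Since the residual $\|W^TX - X\|_F^2$ requires $W^TX$ and $X$ to share the same $d\times n$ shape, I read the theorem's $W$ as a $d\times d$ matrix of rank $k$, so that $W^T$ acts as a (not necessarily orthogonal) rank-$k$ reconstruction operator on the columns of $X$. To avoid a clash with the $d\times k$ loading matrix of ordinary PCA, I will write $U$ for the latter.

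First I would introduce the singular value decomposition $X = U\Sigma V^T$ with singular values $\sigma_1 \ge \sigma_2 \ge \cdots \ge 0$, and write $U_k$ for the matrix of the leading $k$ left singular vectors and $X_k = U_k\Sigma_k V_k^T$ for the truncated SVD. The key structural observation is that for any feasible $W$ we have $\text{rank}(W^TX) \le \text{rank}(W^T) = k$, so every attainable reconstruction $W^TX$ lies inside the set of matrices of rank at most $k$.

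Next I would invoke the Eckart--Young theorem: among all matrices of rank at most $k$, the best Frobenius-norm approximant to $X$ is $X_k$, with residual $\|X_k - X\|_F^2 = \sum_{i>k}\sigma_i^2$. Combined with the rank bound above, this immediately gives the lower bound $\|W^TX - X\|_F^2 \ge \sum_{i>k}\sigma_i^2$ for every feasible $W$. I would then exhibit a feasible point attaining it: choosing $W = U_kU_k^T$, which is symmetric and of rank $k$, yields $W^TX = U_kU_k^T X = U_k\Sigma_k V_k^T = X_k$, so the bound is tight. Hence the minimizer of \eqref{obj1} is exactly the orthogonal projection onto the subspace spanned by the leading $k$ left singular vectors of $X$.

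Finally I would close the loop with classical PCA, whose objective $\max_{U^TU=I} Tr(U^TXX^TU)$ is maximized by $U=U_k$, the leading eigenvectors of $XX^T$, i.e.\ precisely the leading left singular vectors of $X$. Thus the optimal regression operator $W = U_kU_k^T$ is assembled from exactly the principal directions PCA selects, and the two problems recover the identical subspace, which establishes the reformulation. The step deserving the most care, and the one I expect to be the main obstacle, is the \emph{lower bound}: arguing that \emph{no} rank-$k$ reconstruction operator can beat the PCA projection. Everything there hinges on the rank inequality $\text{rank}(W^TX)\le k$ together with Eckart--Young; I would also note that achievability of the optimum tacitly assumes $\text{rank}(X)\ge k$, which is the natural regime when extracting $k$ components.
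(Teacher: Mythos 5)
Your proof is correct, and it takes a genuinely different route from the paper's. The paper factorizes $W=BA^T$ with $A^TA=I$, sets the derivative with respect to $B$ to zero, solves the resulting normal equation via the SVD $X=U\Sigma V^T$ (writing $B=U\alpha+U^\perp\beta$), and back-substitutes to reduce the problem to $\min_{A^TA=I}\|AA^TX-X\|_F^2$; this yields an explicit parametrization of the entire solution set, $W=U_1U_1^T+U_1Q\beta^T(U^\perp)^T$, from which the authors read off that $W^TX=U_1\Sigma_1V_1^T$ coincides with the PCA reconstruction. You instead sandwich the objective: the rank bound $\mathrm{rank}(W^TX)\le k$ plus Eckart--Young gives the lower bound $\sum_{i>k}\sigma_i^2$, and $W=U_kU_k^T$ attains it. Your argument is shorter, avoids the paper's somewhat delicate stationarity computation, and delivers the optimal value explicitly; moreover, when $\sigma_k>\sigma_{k+1}$ the uniqueness part of Eckart--Young forces $W^TX=X_k$ for \emph{every} minimizer, which recovers the paper's key conclusion (the reconstruction is the PCA one even though $W$ itself is not unique) without the explicit parametrization. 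What the paper's calculation buys in exchange is that full description of the non-uniqueness in $W$, which your approach only implies indirectly. Your caveat that attainment of rank exactly $k$ needs $\mathrm{rank}(X)\ge k$ is correctly placed; the paper tacitly assumes the same.
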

\begin{proof}
As we have the constraint $rank(W) = k$, we can easily write $W = BA^T$, where $A \in \mathbb{R}^{d \times k}$ is an orthogonal matrix, $B \in \mathbb{R}^{d \times k}$ and the rank of both $A$ and $B$ are $k$. The above objective function can be rewritten as follows:
\begin{equation}
\begin{aligned}
 & \min_{A,B \in \mathbb{R}^{d \times k}, A^TA=I} \|AB^TX - X \|_F^2 \\
 = & \min_{A,b \in \mathbb{R}^{d \times k}, A^TA=I} Tr(B^TXX^TB) - 2Tr(B^TXX^TA).
\end{aligned}
\label{der1}
\end{equation}

By setting the derivatives of \eqref{der1} w.r.t $B$ to zero, we have:

\begin{equation}
XX^TB = XX^TA
\end{equation}
By denoting $X=U\Sigma V^T$, $U^\perp$ as orthogonal complement standard basis vectors of $U$ and $B=U\alpha + U^\perp \beta$ ($\beta$ is an arbitrary vector), we have the following mathematical deduction:
\begin{equation}
\begin{aligned}
& XX^TB = XX^TA \\
\Rightarrow & U\Sigma ^2U^T(U\alpha + U^\perp \beta) = U \Sigma^2 U^TA \\
\Rightarrow & U\Sigma^2\alpha = U\Sigma^2U^TA \\
\Rightarrow & \alpha = U^TA.
\end{aligned}
\end{equation}
Hence, we have $B=UU^TA + U^\perp\beta$. By incorporating $B$ into \eqref{obj1}, we obtain:
\begin{equation}
\begin{aligned}
& \min_{\beta, A^TA=I} \|AA^TUU^TX + A\beta ^T (U^\perp)^TX - X \|_F^2 \\
\Rightarrow & \min_{\beta , A^TA=I} \|AA^TUU^TU\Sigma V^T + A\beta^T(U^\perp)^TU\Sigma V^T - X\|_F^2 \\
\Rightarrow & \min_{A^TA=I} \|AA^TX - X\|_F^2.
\end{aligned}
\end{equation}
Hence, we have $A=U_1Q$, where $Q$ is an arbitrary orthogonal matrix. And we can get
\begin{equation}
B = UU^TA + U^\perp\beta = UU^TU_1Q + U^\perp\beta = U_1Q + U^\perp \beta
\end{equation}
With the obtained $A$ and $B$, we can get:
\begin{equation}
\begin{aligned}
W & = AB^T = U_1Q(U_1Q + U^\perp\beta)^T \\
 & = U_1U_1^T + U_1Q\beta ^T {U^\perp}^T
\end{aligned}
\end{equation}
The projected samples can be obtained as follows:
\begin{equation}
\begin{aligned}
W^TX & = AB^TX = U_1U_1^TU\Sigma V^T + U_1Q\beta^T{U^\perp}^TU\Sigma V^T \\
 & = U_1\Sigma _1V_1,
\end{aligned}
\end{equation}
which is equivalent to projected samples obtained by classical PCA.
\end{proof}

\emph{\textbf{The connection between the stated Theorem 1 and Theorem 2 in \cite{spca}:}} Zou etal. claim that when $\lambda > 0$, PCA problem can be transformed into a regression-type problem by the following Theorem:
\begin{theorem}
For any $\lambda > 0$, let
\begin{equation}
\begin{aligned}
& (\hat{\alpha}, \hat{\beta}) = \min_{\alpha , \beta} \sum_{i=1}^n \|x_i - \alpha \beta ^T x_i \|^2 + \lambda \|\beta\|^2 \\
& s.t.~\|\alpha\|^2 = 1.
\end{aligned}
\end{equation}
\end{theorem}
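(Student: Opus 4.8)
The plan is to eliminate $\beta$ by ridge regression and then reduce the problem to a Rayleigh-type quotient on the unit sphere, whose maximizer is the leading principal direction. The conclusion I aim to establish is the one Zou etal. assert but which is omitted from the statement above: for every $\lambda>0$ the optimizer satisfies $\hat{\beta}\propto U_1$, where $U_1$ is the first left singular vector of $X$ in the SVD $X=U\Sigma V^T$ (equivalently the top eigenvector of $XX^T$). Thus the ridge-regularized regression recovers the first principal component irrespective of the regularization strength, which is exactly the sense in which PCA becomes a regression problem.

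First I would rewrite the objective in matrix form as $\|X-\alpha\beta^TX\|_F^2+\lambda\|\beta\|^2$ and expand the Frobenius term with the trace operator. Writing $S=XX^T$ and using the constraint $\|\alpha\|^2=1$ to reduce the term $(\alpha^T\alpha)\,\beta^TS\beta$ to $\beta^TS\beta$, the objective becomes
\begin{equation}\nonumber
Tr(S)-2\alpha^TS\beta+\beta^T(S+\lambda I)\beta.
\end{equation}
For fixed $\alpha$ this is a strictly convex quadratic in $\beta$ (strict because $\lambda>0$ makes $S+\lambda I$ positive definite), so setting its gradient to zero yields the closed-form ridge solution $\hat{\beta}=(S+\lambda I)^{-1}S\alpha$. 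Substituting this back and using the identity $(S+\lambda I)\hat{\beta}=S\alpha$ to cancel terms, the value function in $\alpha$ collapses to $Tr(S)-\alpha^TS(S+\lambda I)^{-1}S\alpha$.

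Minimizing over $\alpha$ on the unit sphere is therefore equivalent to maximizing the quadratic form $\alpha^TS(S+\lambda I)^{-1}S\alpha$. Here I would insert the eigendecomposition $S=U\Sigma^2U^T$, which turns the matrix into $\sum_j\frac{\sigma_j^4}{\sigma_j^2+\lambda}u_ju_j^T$ and the objective into the weighted sum $\sum_j\frac{\sigma_j^4}{\sigma_j^2+\lambda}(u_j^T\alpha)^2$ subject to $\sum_j(u_j^T\alpha)^2=1$. The decisive observation is that the weight $g(s)=\frac{s^2}{s+\lambda}$ is strictly increasing in $s=\sigma_j^2$ for every $\lambda>0$, so all the unit mass should be placed on the largest singular value, forcing $\hat{\alpha}=\pm U_1$; crucially, this choice is independent of $\lambda$. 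Reading off $\hat{\beta}=(S+\lambda I)^{-1}SU_1=\frac{\sigma_1^2}{\sigma_1^2+\lambda}U_1$ then gives $\hat{\beta}\propto U_1$, as claimed.

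The routine parts — the trace expansion and the ridge closed form — are mechanical, so the hard part is not computational. The real content lies in the monotonicity argument of the final step: it is precisely the $\lambda$-independence of $\arg\max_j g(\sigma_j^2)$ that produces a single regularization-free direction and hence the equivalence to PCA. I would also flag one genuine subtlety that the plan must address: if the leading singular value is not simple, the maximizer is pinned down only up to the top eigenspace, so $\hat{\beta}\propto U_1$ should be read under the standard assumption $\sigma_1>\sigma_2$.
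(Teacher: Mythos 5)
Your proof is correct, but there is essentially nothing in the paper to compare it against: the paper does not prove this statement. It is quoted from Zou et al.\ as ``Theorem 2 in [spca]'' purely to set up a contrast with the paper's own Theorem 1 (the $\lambda=0$ case), and the quoted statement is even left without its conclusion clause --- the display only says ``let $(\hat{\alpha},\hat{\beta})$ be the minimizer,'' and the claim $\hat{\beta}\propto$ (the leading principal direction) appears only in the surrounding prose. You correctly reconstructed the missing conclusion and supplied the standard argument for it: profile out $\beta$ via the ridge solution $\hat{\beta}=(S+\lambda I)^{-1}S\alpha$ with $S=XX^T$, reduce the problem to maximizing $\alpha^TS(S+\lambda I)^{-1}S\alpha$ over the unit sphere, and observe that the spectral weights $\sigma_j^4/(\sigma_j^2+\lambda)$ are strictly increasing in $\sigma_j^2$, so the maximizer is the top eigenvector of $XX^T$ independently of $\lambda$. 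The algebra checks out (after substitution the quadratic and cross terms collapse to $-\alpha^TS\hat{\beta}$, and $g(s)=s^2/(s+\lambda)$ has derivative $s(s+2\lambda)/(s+\lambda)^2>0$), and your caveat about a non-simple leading eigenvalue is exactly the right one to flag. This is essentially the proof in Zou et al.; it is also worth noting that your profiling argument is cleaner than the route the paper takes for its own $\lambda=0$ Theorem 1, which instead works through an explicit parametrization $B=UU^TA+U^{\perp}\beta$ of the stationary points.
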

In the above theorem, $\beta$ is the lasso estimates and $\hat{\beta}$ $\propto$ the space of PCA. Compared with Theorem 2 proposed in \cite{spca}, our contribution is that we prove that when $\lambda = 0$, PCA problem is completely equivalent to a regression-type problem.

\subsection{The Proposed Objective Function}

In this section, we detail the proposed objective function of SCPCA. Motivated by previous work \cite{l21norm}, which demonstrate that $l_{2,1}$-norm of $W$ is capable of making $W$ sparse, we propose our sparse PCA algorithm as follows:

\begin{equation}
\min_{rank(W)=k} \|(W^TX - X)^T \|_2^2 + \alpha \|W\|_{2,1},
\end{equation}
where $l_{2,1}$-norm of $W$ is defined as

\begin{equation}\nonumber
\|W\|_{2,1} = \sum_{i=1}^d \sqrt{\sum_{j=1}^d W_{ij}^2}.
\end{equation}

In the above function, $\|W^TX - X\|_2^2$ is the most commonly used least square loss function and is mathematically tractable and easily implemented. However, there are still some existing issues which need to take into further consideration. For example, it is well known that the least square loss function is very sensitive to outliers \cite{l21norm}. To address this issue, it is important for us to adopt a more robust loss function in the objective. In \cite{l21norm}, Nie etal. demonstrate that $l_{2,1}$-norm is more capable of dealing with the noisy data.

Therefore, our proposed algorithm is rewritten as follows:

\begin{equation}
\min_{rank(W) = k} \|(W^TX - X)^T\|_{2,1} + \alpha \|W\|_{2,1}
\end{equation}

In the above formulation, the loss function $\|W^TX - X\|_{2,1}$ is robust to outliers, as proven in \cite{l21norm}. Meanwhile, $\|W\|_{2,1}$ in the regularization term is guaranteed to make $W$ sparse in rows.

Next, we first give the definition of trace norm. The trace norm of $W$ is defined as 
\begin{equation}
\|W\|_{*} = Tr(WW^T)^{\frac{1}{2}}.
\end{equation}
Following the work in \cite{robustPCAMAYI} \cite{ratiorules}, we restrict $W$ to be a low rank matrix. To have the problem tackable, we propose to minimize the trace norm of $W$, which is the convex hull of the rank of $W$. The objective function of the proposed algorithm is then given by:

\begin{equation}
\min_W \|(W^TX - X)^T\|_{2,1} + \alpha \|W\|_{2,1} + \beta \|W\|_{*}
\label{finalobj}
\end{equation}

Compared with directly minimizing the rank of $W$, our proposed objective function as shown in \eqref{finalobj} is convex. We therefore name the proposed algorithm convex sparse PCA (CSPCA). Different from the previous robust PCA algorithms \cite{robustPCAMAYI} \cite{RPCAOP}, the proposed algorithm is inductive, and able to deal with the out-of-sample data which are unseen in the training phase. Given a new testing data point $x_t$, we can get its low dimensional representation by $W^Tx_t$ directly.

\subsection{Optimization}

As can be seen from Eq. \eqref{finalobj}, the proposed algorithm involves the $l_{2,1}$-norm, which is non-smooth and cannot be solved in a closed form. Hence, we proposed to solve this problem as follows.

For an arbitrary matrix $A$, we denote $A = [A^1, \cdots , A^d]$, where $d$ is the number of features. By setting the derivatives w.r.t $W$ to zero, we have

\begin{equation}\nonumber
XD_1X^TW + \alpha D_2W + \beta D_3W = XD_1X.
\end{equation}

Then we have

\begin{equation}
 W = (XD_1X^T + \alpha D_2 + \beta D_3)^{-1} (XD_1X^T),
\label{devri}
\end{equation}
where $D_1$, $D_2$ and $D_3$ are diagonal matrices defined as follows.

$
D_1 = \begin{bmatrix}
\frac{1}{2\|e^1\|_2} & & \\
  &  \ddots &  \\
  &   &  \frac{1}{2\|e^d\|_2} \\
\end{bmatrix}$,

where $E = (W^TX-X)^T$.

$
D_2 = \begin{bmatrix}
\frac{1}{2\|w^1\|_2} & & \\
  &  \ddots &  \\
  &   &  \frac{1}{2\|w^d\|_2} \\
\end{bmatrix}$

$D_3 = \frac{1}{2}(WW^T)^{-\frac{1}{2}}$

Based on the above mathematical deduction, we propose an iterative algorithm to optimize the objective function Eq. \eqref{finalobj}, which is summarized in Algorithm 1. In each iteration, $E$, $D_1$, $D_2$ and $D_3$ are updated by the current $W$, and then $W$ is updated based on the current calculated $E$, $D_1$, $D_2$ and $D_3$. Once $W$ is obtained and a new data point $x_i$, we get the projected representation by computing $W^Tx_i$. As the project matrix $W$ is sparse, it actually assigned a weight to each feature dimension and thus can be used for feature analysis. The importance score of each feature can be computed by $\|w^i\|_2(1\leq i \leq d)$. Then we can rank each feature according to this score. In this sense, $W$ can be readily used for feature selection and we only select the top $k$ features based on the score $\|w^i\|_2(1\leq i \leq d)$.

\begin{algorithm}
\caption{Algorithm to solve the problem in \eqref{finalobj}}
\SetAlgoLined
\KwData{Data matrix $X$ \\
~~~~~~~~Parameters $\alpha$, $\beta$}
\KwResult{W}
Set $t$ = 0 \;
Initialize $W_0 \in \mathbb{R}^{d \times c}$ randomly \;
\Repeat{Converence}{
Compute $E_t$ according to $E_t = (W_t^TX - X)^T$ \;
Compute the diagonal matrix $D_{1t}$ as follows:
\begin{equation}\nonumber
D_{1t} = \begin{bmatrix}
\frac{1}{2\|e_t^1\|_2} & & \\
  &  \ddots &  \\
  &   &  \frac{1}{2\|e_t^d\|_2} \\
\end{bmatrix};
\end{equation} \\
Compute the diagonal matrix $D_{2t}$ as follows:
\begin{equation}\nonumber
D_{2t} = \begin{bmatrix}
\frac{1}{2\|w_t^1\|_2} & & \\
  &  \ddots &  \\
  &   &  \frac{1}{2\|w_t^d\|_2} \\
\end{bmatrix};
\end{equation} \\
Compute $D_{3t}$ according to 
\begin{equation}\nonumber
D_{3t} = \frac{1}{2}(W_tW_t^T)^{-\frac{1}{2}};
\end{equation} \\
Update $W_{t+1}$ according to 
\begin{equation}\nonumber
W_{t+1} = (XD_{1t}X^T + \alpha D_{2t} + \beta D_{3t})^{-1} (XD_{1t}X^T);
\end{equation} \\
$t = t + 1$ \;
}
Return $W = W_t$. 
\end{algorithm}

\vspace{-3mm}

\subsection{Convergence Analysis}

In this section, we validate Algorithm 1 shown above. Specially, we prove that the objective function value converges to the optimal $W$ by the following theorem.

\begin{theorem}
The objective function value shown in Eq. \eqref{finalobj} monotonically decreases in each iteration until convergence using the iterative approach in Algorithm 1.
\end{theorem}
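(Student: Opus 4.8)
The plan is to read Algorithm 1 as a majorization--minimization (reweighted) scheme and to exhibit a single surrogate function that $W_{t+1}$ exactly minimizes while coinciding with half the true objective at $W_t$. Write $J(W)$ for the objective in \eqref{finalobj} and $E(W)=(W^TX-X)^T$, and freeze the three reweighting matrices at the values $D_{1t},D_{2t},D_{3t}$ computed from $W_t$. Define
\begin{equation}\nonumber
g_t(W)=Tr\!\big(E(W)^T D_{1t}E(W)\big)+\alpha\,Tr\!\big(W^T D_{2t}W\big)+\beta\,Tr\!\big(W^T D_{3t}W\big).
\end{equation}
Since $D_{1t},D_{2t}\succ 0$ are diagonal and $D_{3t}=\tfrac12(W_tW_t^T)^{-1/2}\succeq 0$, $g_t$ is a convex quadratic in $W$; its normal equations are exactly the stationarity condition \eqref{devri}, so $W_{t+1}=\arg\min_W g_t(W)$ and in particular $g_t(W_{t+1})\le g_t(W_t)$. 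First I would substitute the frozen weights to obtain the anchoring identity: each reweighted term collapses, e.g. $\sum_i (D_{1t})_{ii}\|e_t^i\|_2^2=\tfrac12\|E_t\|_{2,1}$ and $\tfrac12\,Tr\!\big((W_tW_t^T)^{-1/2}W_tW_t^T\big)=\tfrac12\|W_t\|_*$, so that $g_t(W_t)=\tfrac12 J(W_t)$.

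Next I would bound $J(W_{t+1})$ from above by $g_t(W_{t+1})$ plus a controlled remainder, using the elementary inequality $\|a\|_2-\tfrac{\|a\|_2^2}{2\|b\|_2}\le\tfrac{\|b\|_2}{2}$, which follows at once from $(\|a\|_2-\|b\|_2)^2\ge 0$. Applied row by row to $E$ and summed it gives $\|E_{t+1}\|_{2,1}-\sum_i (D_{1t})_{ii}\|e_{t+1}^i\|_2^2\le\tfrac12\|E_t\|_{2,1}$, and the identical argument on the rows of $W$ controls the $\alpha\|W\|_{2,1}$ term. For the trace norm I need the matrix analogue
\begin{equation}\nonumber
\|W_{t+1}\|_*-\tfrac12\,Tr\!\big((W_tW_t^T)^{-1/2}W_{t+1}W_{t+1}^T\big)\le\tfrac12\|W_t\|_*.
\end{equation}
Summing the three bounds, the subtracted quadratic pieces reassemble into precisely $g_t(W_{t+1})$, yielding $J(W_{t+1})-g_t(W_{t+1})\le\tfrac12 J(W_t)$.

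Finally I would combine the two facts. From the anchoring identity and the minimizing property, $g_t(W_{t+1})\le g_t(W_t)=\tfrac12 J(W_t)$; adding this to $J(W_{t+1})-g_t(W_{t+1})\le\tfrac12 J(W_t)$ gives $J(W_{t+1})\le J(W_t)$, the claimed monotone decrease. Since $g_t$ is strictly convex the step is strict unless $W_{t+1}=W_t$, and as $J$ is a sum of norms it is bounded below by $0$; hence the sequence $\{J(W_t)\}$ is monotone and bounded, therefore convergent. I expect the trace-norm inequality to be the main obstacle, because, unlike the two vector bounds, it cannot be reduced to a scalar $(\cdot)^2\ge 0$ estimate: $W_tW_t^T$ and $W_{t+1}W_{t+1}^T$ need not commute. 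I would obtain it from the operator concavity of $M\mapsto Tr(M^{1/2})$ on the positive semidefinite cone, whose first-order (supporting-tangent) inequality at $N=W_tW_t^T$ reads $Tr(M^{1/2})\le Tr(N^{1/2})+\tfrac12\,Tr\!\big(N^{-1/2}(M-N)\big)$; setting $M=W_{t+1}W_{t+1}^T$ and rearranging yields exactly the bound above.
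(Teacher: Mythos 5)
Your proposal is correct and follows essentially the same route as the paper: both read Algorithm~1 as minimizing the reweighted quadratic surrogate $g_t$ whose normal equations are Eq.~\eqref{devri}, anchor it at $W_t$ where it equals half the true objective, and convert $g_t(W_{t+1})\le g_t(W_t)$ into $J(W_{t+1})\le J(W_t)$ via the add-and-subtract trick with the inequality $\|a\|_2-\|a\|_2^2/(2\|b\|_2)\le\|b\|_2/2$. The one substantive difference is in your favor: the paper justifies all three majorization steps, including the trace-norm one, by citing only the vector lemma of \cite{l21norm}, which does not directly cover that term since $W_tW_t^T$ and $W_{t+1}W_{t+1}^T$ need not commute, whereas you correctly isolate this as the nontrivial step and supply the missing matrix inequality $\|W_{t+1}\|_*-\tfrac{1}{2}Tr\bigl((W_tW_t^T)^{-1/2}W_{t+1}W_{t+1}^T\bigr)\le\tfrac{1}{2}\|W_t\|_*$ from concavity of $M\mapsto Tr(M^{1/2})$ (equivalently from $Tr\bigl(B^{-1}(A-B)^2\bigr)\ge 0$ with $A=(W_{t+1}W_{t+1}^T)^{1/2}$ and $B=(W_tW_t^T)^{1/2}$). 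Both arguments share the unstated caveat that the weights require $\|e_t^i\|_2\neq 0$, $\|w_t^i\|_2\neq 0$ and $W_tW_t^T$ nonsingular at every iterate.
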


\vspace{-3mm}
\begin{proof}
According to the 8th step of Algorithm 1, it can be safely inferred that:

\vspace{-5mm}
\begin{equation}\nonumber
\begin{aligned}
W_{t+1} & = \arg \min Tr((W^TX - X)D_1(W^TX - X)) \\
& + \alpha Tr(W^TD_2W) + \beta Tr(W^TD_3W)
\end{aligned}
\end{equation}

\vspace{-4mm}
Therefore, we have:
\begin{equation}\nonumber
\begin{aligned}
& Tr((W_{t+1}^TX - X)D_1(W_{t+1}^TX - X)) \\
& + \alpha Tr(W_{t+1}^TD_2W_{t+1}) + \beta Tr(W_{t+1}D_3W_{t+1}) \\ \\
\leq & Tr((W_t^TX - X)D_1(W_t^TX - X)) \\
& + \alpha Tr(W_t^TD_2W_t) + \beta Tr(W_tD_3W_t)
\end{aligned}
\end{equation}

\begin{equation}\nonumber
\begin{aligned}
\Rightarrow & \sum_{i=1}^n \frac{\|W_{t+1}^Tx_i - x_i\|_2^2}{2\|W_{t+1}^Tx_i - x_i\|_2} + \alpha \sum_{i=1}^d \frac{\|w_{t+1}^i\|_2^2}{2\|w_t^i\|_2}  \\
& + \frac{\beta}{2} Tr(W_{t+1}^T(W_tW_t^T)^{-\frac{1}{2}}W_{t+1}) \\
\leq & \sum_{i=1}^n \frac{\|W_{t}^Tx_i - x_i\|_2^2}{2\|W_{t}^Tx_i - x_i\|_2} + \alpha \sum_{i=1}^d \frac{\|w_{t}^i\|_2^2}{2\|w_t^i\|_2}  \\
& + \frac{\beta}{2} Tr(W_{t}^T(W_tW_t^T)^{-\frac{1}{2}}W_{t}) 
\end{aligned}
\end{equation}

\begin{equation}\nonumber
\begin{aligned}
\Rightarrow & \sum_{i=1}^n \|W_{t+1}^Tx_i - x_i \|_2 - \sum_{i=1}^n \|W_{t+1}^Tx_i - x_i \|_2 \\& + \frac{\|W_{t+1}^Tx_i - x_i\|_2^2}{2\|W_{t+1}^Tx_i - x_i\|_2} + \alpha \sum_{i=1}^d \|w_{t+1}^i\|_2 - \alpha \sum_{i=1}^d \|w_{t+1}^i\|_2 \\
& + \alpha \sum_{i=1}^d \frac{\|w_{t+1}^i\|_2^2}{2\|w_t^i\|_2} + \frac{\beta}{2} Tr((W_{t+1}W_{t+1}^T)^{\frac{1}{2}}) \\ & - \frac{\beta}{2} Tr((W_{t+1}W_{t+1}^T)^{\frac{1}{2}}) + \frac{\beta}{2} Tr(W_{t+1}^T(W_tW_t^T)^{-\frac{1}{2}}W_{t+1})  \\
\leq & \sum_{i=1}^n \|W_{t}^Tx_i - x_i \|_2 - \sum_{i=1}^n \|W_{t}^Tx_i - x_i \|_2 + \frac{\|W_{t}^Tx_i - x_i\|_2^2}{2\|W_{t}^Tx_i - x_i\|_2} \\
& + \alpha \sum_{i=1}^d \|w_{t+1}^i\|_2 - \alpha \sum_{i=1}^d \|w_{t}^i\|_2 + \alpha \sum_{i=1}^d \frac{\|w_{t}^i\|_2^2}{2\|w_t^i\|_2} \\
& + \frac{\beta}{2} Tr((W_{t}W_{t}^T)^{\frac{1}{2}}) - \frac{\beta}{2} Tr((W_{t}W_{t}^T)^{\frac{1}{2}}) \\
& + \frac{\beta}{2} Tr(W_{t}^T(W_tW_t^T)^{-\frac{1}{2}}W_{t}) 
\end{aligned}
\end{equation}

\begin{equation}\nonumber
\begin{aligned}
\Rightarrow & \sum_{i=1}^n \|W_{t+1}^Tx_i - x_i\|_2 + \alpha \sum_{i=1}^d \|w_{t+1}^i\|_2 + \frac{\beta}{2} Tr((W_{t}W_{t}^T)^{\frac{1}{2}}) \\
& - \alpha (\sum_{i=1}^d \|w_{t+1}^i\|_2 - \sum_{i=1}^d \frac{\|w_{t+1}^i\|_2^2}{2\|w_t^i\|_2}) \\ & -\frac{\beta}{2} (Tr((W_{t+1}W_{t+1}^T)^{\frac{1}{2}}) - Tr(W_{t+1}^T(W_tW_t^T)^{-\frac{1}{2}})W_{t+1})) \\
\leq & \sum_{i=1}^n \|W_{t}^Tx_i - x_i\|_2 + \alpha \sum_{i=1}^d \|w_{t}^i\|_2 + \frac{\beta}{2} Tr((W_{t}W_{t}^T)^{\frac{1}{2}}) \\
& - \alpha (\sum_{i=1}^d \|w_{t}^i\|_2 - \sum_{i=1}^d \frac{\|w_{t}^i\|_2^2}{2\|w_t^i\|_2}) \\ & -\frac{\beta}{2} (Tr((W_{t}W_{t}^T)^{\frac{1}{2}}) - Tr(W_{t}^T(W_tW_t^T)^{-\frac{1}{2}})W_{t}))
\end{aligned}
\end{equation}

It has been proven in \cite{l21norm} that for arbitrary non-zero vectors ${v_t^i}|_{i=1}^r$ we have:

\begin{equation}\nonumber
\sum_i \|v_{t+1}^i\|_2 - \sum_i \frac{\|v_{t+1}^i\|_2^2}{2\|v_t^i\|_2}
\leq \sum_i \|v_{t}^i\|_2 - \sum_i \frac{\|v_{t}^i\|_2^2}{2\|v_t^i\|_2},
\end{equation}
where $r$ is any non-zero number. Thus, we can obtain the following inequality:
\begin{equation}\nonumber
\begin{aligned}
& \sum_{i=1}^n \|W_{t+1}^Tx_i - x_i \|_2 + \alpha \sum_{i=1}^d \|w_{t+1}^i\|_2 + \frac{\beta}{2} Tr((W_{t+1}W_{t+1})^{\frac{1}{2}}) \\
\leq & \sum_{i=1}^n \|W_{t}^Tx_i - x_i \|_2 + \alpha \sum_{i=1}^d \|w_{t}^i\|_2 + \frac{\beta}{2} Tr((W_{t}W_{t})^{\frac{1}{2}}) 
\end{aligned}
\end{equation}

\begin{equation}\nonumber
\begin{aligned}
\Rightarrow & \|W_{t+1}^TX - X\|_{2,1} + \alpha \|W_{t+1}\|_{2,1} + \beta \|W_{t+1}\|_{*} \\
\leq & \|W_{t}^TX - X\|_{2,1} + \alpha \|W_{t}\|_{2,1} + \beta \|W_{t}\|_{*}
\end{aligned}
\end{equation}
which indicates that the objective function value of Eq. \eqref{finalobj} monotonically decreases until converging to the optimal $W$ via the proposed approach in Algorithm 1.
\end{proof}

To step further, we prove that the proposed algorithm converges to the global optima by Theorem \ref{gloablopt}.

\begin{theorem}
\label{gloablopt}
The objective function value shown in Eq. \eqref{finalobj} converges to the global optima using Algorithm 1.
\end{theorem}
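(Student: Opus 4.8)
The plan is to combine the monotone-decrease property already furnished by the preceding theorem with the convexity of the objective in \eqref{finalobj}. For a convex function it suffices to show that the limit of the iterates satisfies the first-order optimality condition; convexity then upgrades that stationary point to a \emph{global} minimizer, and since the objective values decrease to the objective value at this point, Algorithm 1 must converge to the global optimum.

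First I would establish convexity of $f(W) = \|(W^TX-X)^T\|_{2,1} + \alpha\|W\|_{2,1} + \beta\|W\|_*$. The map $W \mapsto W^TX - X$ is affine, and both $\|\cdot\|_{2,1}$ and $\|\cdot\|_*$ are norms, hence convex; a norm composed with an affine map is convex, and a nonnegative linear combination of convex functions is convex. Moreover $f$ is bounded below by $0$, so the nonincreasing sequence $\{f(W_t)\}$ guaranteed by the previous theorem converges. Because $\alpha,\beta>0$ make $f$ coercive, the sublevel sets are bounded, the iterates stay bounded, and a convergent subsequence with limit $W^*$ exists.

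Second, I would analyze the fixed point. At convergence $W_{t+1} = W_t =: W^*$, so the update \eqref{devri} collapses to the stationarity equation $(XD_1X^T + \alpha D_2 + \beta D_3)W^* = XD_1X^T$, with $D_1, D_2, D_3$ evaluated at $W^*$. Substituting the definitions of the diagonal reweighting matrices, this is exactly $\nabla f(W^*) = 0$, since $2XD_1(X^TW-X^T)$, $2\alpha D_2 W$, and $2\beta D_3 W = \beta(WW^T)^{-1/2}W$ are precisely the gradients of the three terms of $f$ at any point where the relevant quantities do not vanish. Thus $W^*$ satisfies the first-order optimality condition of $f$, and by convexity $W^*$ is a global minimizer; combined with $f(W_t)\downarrow f(W^*)$, this proves Algorithm 1 attains the global optimum.

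The hard part will be handling the non-smoothness of the $l_{2,1}$-norm and the trace norm at degenerate points, where some residual $\|W^{*T}x_i - x_i\|_2$ or some row $\|w^{*i}\|_2$ vanishes, or where $W^*W^{*T}$ is singular, since there the diagonal matrices $D_1, D_2, D_3$ blow up and $\nabla f$ must be replaced by the subdifferential $\partial f$. The clean way around this is the standard smoothing argument: replace each nonsmooth term by a differentiable surrogate such as $\sqrt{\|\cdot\|_2^2 + \epsilon}$, and analogously $Tr((WW^T+\epsilon I)^{1/2})$ for the trace norm, for which the reweighted update is literally the gradient-zero condition of a smooth strictly feasible convex function; one proves that the iteration converges to a global minimizer of the surrogate for each $\epsilon>0$ and then lets $\epsilon\to 0$, using continuity of the minimizers in $\epsilon$ to recover $0\in\partial f(W^*)$ for the original objective. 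Reconciling this limiting subgradient condition with the fixed-point equation of the unsmoothed reweighting is the single delicate step on which the rigor of the whole argument rests.
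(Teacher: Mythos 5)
Your proposal follows essentially the same route as the paper: at a fixed point of the reweighted update \eqref{devri} the stationarity equation $XD_1X^TW^* + \alpha D_2W^* + \beta D_3W^* - XD_1X^T = 0$ holds, and convexity of the objective in \eqref{finalobj} promotes that stationary point to a global minimizer. You are in fact more careful than the paper, which merely asserts convexity and invokes the KKT conditions without addressing the non-differentiability of the $l_{2,1}$-norm and trace-norm terms at degenerate points (vanishing residuals or rows, singular $W^*W^{*T}$) --- the smoothing argument you sketch is exactly what would be needed to close that gap, and the paper omits it entirely.
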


\begin{proof}
Once the objective function converges using algorithm 1 and returns $W^*$. According to Eq. \eqref{devri}, we can get the following equation:
\begin{equation}\nonumber
XD_1X^TW^* + \alpha D_2W^* + \beta D_3W^* - XD_1X = 0
\end{equation}
We can see that the derivatives w.r.t $W$ equals to zero, and we get the local solution to the objective function. Note that the proposed method is a convex problem. Hence, according to the Karush-Kuhn-Tucker (KKT) conditions, we conclude that the objective function converges to the global optima using Algorithm \ref{gloablopt}.
\end{proof}

\begin{table*}[tb]
\small
\caption{SETTINGS OF THE DATA SETS}
\centering
\begin{tabular}{|c||r|c|c|c|}
\hline
Dataset & Size(n) & No. of variables & Class Number & Number of Selected Features \\
\hline
YaleB & 2414 & 1024 & 38 & $\{500, 600, 700, 800, 900, 1000\}$ \\
\hline
ORL & 400 & 1024 & 40 &  $\{500, 600, 700, 800, 900, 1000\}$ \\
\hline
JAFFE & 213 & 676 & 10 &  $\{350, 390, 430, \cdots 590, 610, 650\}$ \\
\hline
HumanEVA & 10000 & 168 & 10   & $\{50, 60, 70, 80, 90, 100\}$ \\
\hline
Coil20 & 1440 & 1024 & 20 &  $\{170, 190, 210, 230, 250, 270, 290\}$ \\
\hline
USPS & 9298 & 256 & 10 &  $\{120, 140, 160, 180, 200, 220, 240\}$ \\
\hline
\end{tabular}
\label{setting}
\end{table*}

\section{Experiments}

In this section, we evaluate performance of the proposed algorithm, which can be applied to many applications, such as dimension reduction and unsupervised feature selection. Following previous unsupervised feature selection algorithms \cite{laplacianscore} \cite{SPEC} \cite{UDFS}, we only evaluate the performance of CSPCA for feature selection and compare with related state-of-the-art unsupervised feature selection.

\subsection{Experimental Settings}
To demonstrate the effectiveness of the proposed algorithm for feature selection, we compare it with one baseline and several unsupervised feature selection methods. The compared algorithms are described as follows.

\begin{enumerate}
\item Using all features (All-Fea): We directly adopt the original features without performing feature selection. This approach is used as a baseline.
\item Max Variance: This is a feature selection method using the classical PCA criteria. Features with maximum variances are chosen for subsequent tasks.
\item Laplacian Score: To best preserve the local manifold structure, feature consistent with Gaussian Laplacian matrix are selected \cite{laplacianscore}. The importance of each feature is determined by its power.
\item SPEC: This is a spectral regression based state-of-the-art feature selection algorithm. Features are selected one by one by leveraging the work of spectral graph theory \cite{SPEC}.
\item MCFS: Features are selected based on spectral analysis and sparse regression problem \cite{MCFS}. Specifically, features are selected such that the multi-cluster structure of the data can be best preserved.
\item UDFS: Features are selected by a joint framework of discriminative analysis and $l_{2,1}$-norm minimization \cite{UDFS}. UDFS selects the most discriminative feature subset from the whole feature set in batch mode.
\end{enumerate}

For each algorithm, all the parameters (if any) are tuned in the range of $\{10^{-6}, 10^{-4}, 10^{-2}, 10^0, 10^2, 10^4, 10^6\}$ and the best results are reported. There are some parameters need to be set in advance. For LS, MCFS and UDFS, we empirically set $k = 5$ for all the datasets to specify the size of neighborhoods. The number of selected features are set as described in Table 1 for all the datasets. For all the compared algorithms, we report the best clustering result with optimal parameters. In the experiments, we utilize K-means algorithm to cluster samples based on the selected features. Note that performance of K-means varies with different initializations. We randomly repeat the clustering 30 times for each setup and report average results with standard deviation.

\subsection{Datasets}

The datasets used in our experiments are described as follows.

\begin{enumerate}
\item Face Image Data: We use three face image datasets for face recognition, namely YaleB \cite{yaleb}, ORL \cite{ORL} and JAFFE \cite{JAFFE}. The YaleB dataset contains 2414 near frontal images from 38 persons under different illuminations. We resize each image to $32 \times 32$. The ORL dataset consists of 40 different subjects with 10 images each. We also resize each image to $32 \times 32$. The Japanese Female Facial Expression (JAFFE) dataset consists of 213 images of different facial expressions from 10 Japanese female models. The images are resized to $26 \times 26$.
\item 3D Motion Data: The HumanEVA dataset is used to evaluate the performance of our algorithm in terms of 3D motion annotation \footnote{http://vision.cs.brown.edu/humaneva/}. This dataset contains five types of motions. Based on the 16 joint coordinates in 3D space, 1590 geometric pose descriptors are extracted using the method proposed in \cite{humaneva} to represent 3D motion data.
\item Object Image Data: We use the Coil20 dataset \cite{COIL20} for object recognition. This dataset includes 1440 grey scale images with 20 different objects. In our experiment, we resize each image to $32 \times 32$.
\item Handwritten Digit Data: We use the USPS dataset to validate the performance on handwritten digit recognition. The dataset consists of 9298 gray-scale handwritten digit images. We resize the images to $16 \times 16$.
\end{enumerate}

\begin{table*}[!ht]
\small
\renewcommand{\arraystretch}{1.3}
\caption{Performance Comparison(ACC $\pm~std$ \%) of All-Fea, MaxVar, LScore, SPEC, MCFS, UDFS and CSPCA. The best results are highlighted in bold. From this table, we can observe that our proposed algorithm has much advantage over other algorithms on all the used datasets.}
\centering
\begin{tabular}{|c||c|c|c|c|c|c|}
\hline
  &  YaleB &  ORL &  JAFFE & HumanEVA &  Coil20 &  USPS \\
\hline \hline
All-Fea & $12.2 \pm 2.8$ & $69.0 \pm 1.7$ & $90.6 \pm 2.8$ & $47.2 \pm 2.0$ & $71.8 \pm 3.7$ & $71.1 \pm 2.7$ \\
\hline
MaxVar & $12.7 \pm 2.6$ & $67.8 \pm 1.9$ & $95.3 \pm 2.1$ & $47.6 \pm 2.7$ & $71.7 \pm 3.2$ & $71.0 \pm 2.8$  \\
\hline
LScore & $12.3 \pm 2.9$ & $70.2 \pm 2.2$ & $94.8 \pm 2.6$ & $47.8 \pm 2.2$ & $71.4 \pm 3.5$ & $73.7 \pm 2.4$  \\
\hline
SPEC & $12.9 \pm 2.5$ & $67.0 \pm 1.8$ & $96.2 \pm 2.7$ & $50.3 \pm 2.5$ & $72.0 \pm 3.8$ & $72.2 \pm 2.5$  \\
\hline
MCFS & $14.5 \pm 2.4$ & $69.4 \pm 1.5$ & $96.4 \pm 1.9$ & $53.6 \pm 2.9$ & $72.4 \pm 3.4$ & $73.1 \pm 2.8$ \\
\hline
UDFS & $15.7 \pm 2.8$ & $69.9 \pm 1.2$ & $96.8 \pm 2.4$ & $56.3 \pm 2.4$ & $73.3 \pm 3.0$ & $73.7 \pm 2.9$ \\
\hline
CSPCA & $\mathbf{19.3 \pm 2.2}$ & $\mathbf{71.3 \pm 1.0}$ & $\mathbf{97.2 \pm 2.2}$ & $\mathbf{56.4 \pm 2.1}$ & $\mathbf{75.2 \pm 2.9}$ & $\mathbf{76.9 \pm 2.1}$  \\
\hline
\end{tabular}
\label{ACC}
\end{table*}

\begin{table*}
\small
\renewcommand{\arraystretch}{1.3}
\caption{Performance Comparison(NMI $\pm~std$ \%) of All-Fea, MaxVar, LScore, SPEC, MCFS, UDFS and CSPCA. The best results are highlighted in bold. From this table, we can observe that our proposed algorithm has much advantage over other algorithms on all the used datasets.}
\centering
\begin{tabular}{|c||c|c|c|c|c|c|}
\hline
  &  YaleB &  ORL &  JAFFE & HumanEVA &  Coil20 &  USPS \\
\hline \hline
All-Fea & $19.3 \pm 3.3$ & $83.7 \pm 2.4$ & $91.7 \pm 3.4$ & $52.2 \pm 4.3$ & $79.1 \pm 5.1$ & $61.7 \pm 3.4$ \\
\hline
MaxVar & $21.3 \pm 2.9$ & $83.0 \pm 2.8$ & $93.0 \pm 3.1$ & $52.7 \pm 3.9$ & $79.5 \pm 4.8$ & $63.7 \pm 3.7$  \\
\hline
LScore & $19.4 \pm 3.5$ & $85.3 \pm 2.6$ & $92.9 \pm 2.9$ & $53.2 \pm 4.1$ & $80.1 \pm 5.2$ & $63.1 \pm 3.2$  \\
\hline
SPEC & $20.3 \pm 3.1$ & $81.6 \pm 2.1$ & $94.7 \pm 3.8$ & $55.4 \pm 3.7$ & $81.2 \pm 5.5$ & $62.3 \pm 3.6$  \\
\hline
MCFS & $22.7 \pm 2.7$ & $83.6 \pm 2.2$ & $95.1 \pm 3.7$ & $57.1 \pm 3.8$ & $82.5 \pm 4.9$ & $63.1 \pm 4.0$ \\
\hline
UDFS & $23.5 \pm 3.2$ & $84.2 \pm 3.1$ & $95.6 \pm 3.3$ & $60.0 \pm 3.4 $ & $83.2 \pm 5.3$ & $63.8 \pm 3.9$  \\
\hline
CSPCA & $\mathbf{29.4 \pm 2.1}$ & $\mathbf{84.7 \pm 2.8}$ & $\mathbf{96.0 \pm 3.8}$ & $\mathbf{62.4 \pm 3.9}$ & $\mathbf{85.1 \pm 5.1}$ & $\mathbf{65.5 \pm 3.1}$  \\
\hline
\end{tabular}
\label{NMI}
\end{table*}

\subsection{Evaluation Metrics}

Following related unsupervised feature selection work \cite{laplacianscore} , we adopt clustering accuracy (ACC) and normalized mutual information (NMI) as our evaluation metrics in our experiments.

Let $q_i$ represent the clustering label result from a clustering algorithm and $p_i$ represent the corresponding ground truth label of arbitrary data point $x_i$. Then $ACC$ is defined as follows:

\begin{equation}
ACC = \frac{\sum_{i=1}^n \delta (p_i, map(q_i))}{ n },
\end{equation}
where $\delta(x, y) = 1$ if $x=y$ and $\delta (x, y) = 0$ otherwise. $map(q_i)$ is the best mapping function that permutes clustering labels to match the ground truth labels using the Kuhn-Munkres algorithm. A larger ACC indicates a better clustering performance.

For any two arbitrary variable $P$ and $Q$, NMI is defined as follows \cite{NMI}:

\begin{equation}
NMI = \frac{I(P, Q)}{\sqrt{H(P)H(Q)}},
\end{equation}
where $I(P, Q)$ computes the mutual information between $P$ and $Q$, and $H(P)$ and $H(Q)$ are the entropies of $P$ and $Q$. Let $t_l$ represent the number of data in the cluster $\mathcal{C}_l(1 \leq l \leq c)$ generated by a clustering algorithm and $\widetilde{t_h}$ represent the number of data points from the $h$-th ground truth class. NMI metric is then computed as follows \cite{NMI}:

\begin{equation}
NMI = \frac{\sum_{l=1}^c \sum_{h=1}^c t_{l,h} log(\frac{n \times t_{l, h}}{•t_l\widetilde{t_h}})}{\sqrt{(\sum_{l=1}^c t_l \log \frac{t_l}{n})(\sum_{h=1}^c \widetilde{t_h} \log \frac{\widetilde{t_h}}{n})}},
\end{equation}
where $t_{l,h}$ is the number of data samples that lies in the intersection between $\mathcal{C}_l$ and $h$th ground truth class. Similarly, a larger NMI indicates a better clustering performance.

\subsection{Experimental Results}

Empirical studies are conducted on six real-world data sets to validate the performance of the proposed algorithm and compare to state-of-the-art algorithms. Table \ref{ACC} and Table \ref{NMI} summarise ACC and NMI comparison results of all the compared algorithms over the used datasets. From the experimental results, we have the following observations.
\begin{enumerate}
\item The feature selection algorithms generally have better performance than the baseline All-Fea, which demonstrates that feature selection is necessary and effective. It can significantly reduce feature number as well as improve the performance.
\item Both SPEC and MCFS utilize a two-step approach (spectral regression) for feature selection. The difference between them is MCFS select features in a batch mode but SPEC conduct this task separately. We can see MCFS gets better results than SPEC because it is a better way to analyze features jointly for feature selection.
\item We can see from the result tables that UDFS gains the second best result, which indicates that it is beneficial to analyze features jointly and simultaneously adopt discriminative information and local structure of data distribution.
\item From the experimental results, we can observe that the proposed CSPCA consistently outperform the other compared algorithms. This phenomenon demonstrate that the proposed algorithm is able to select the most informative features.
\end{enumerate}

\begin{figure*}[!ht]
\centering
\subfigure[]{
\includegraphics[scale=0.18]{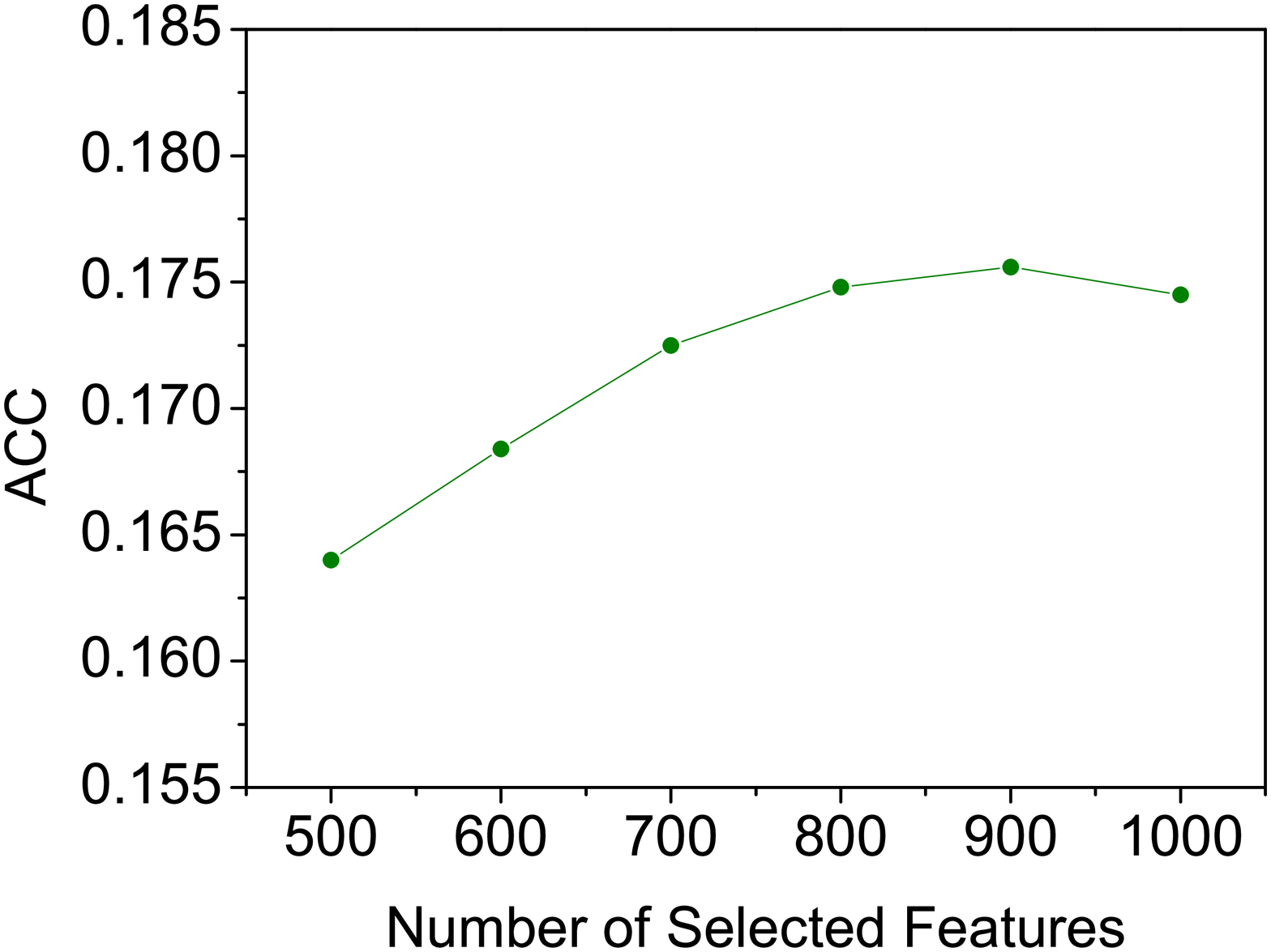}}
\subfigure[]{
\includegraphics[scale=0.18]{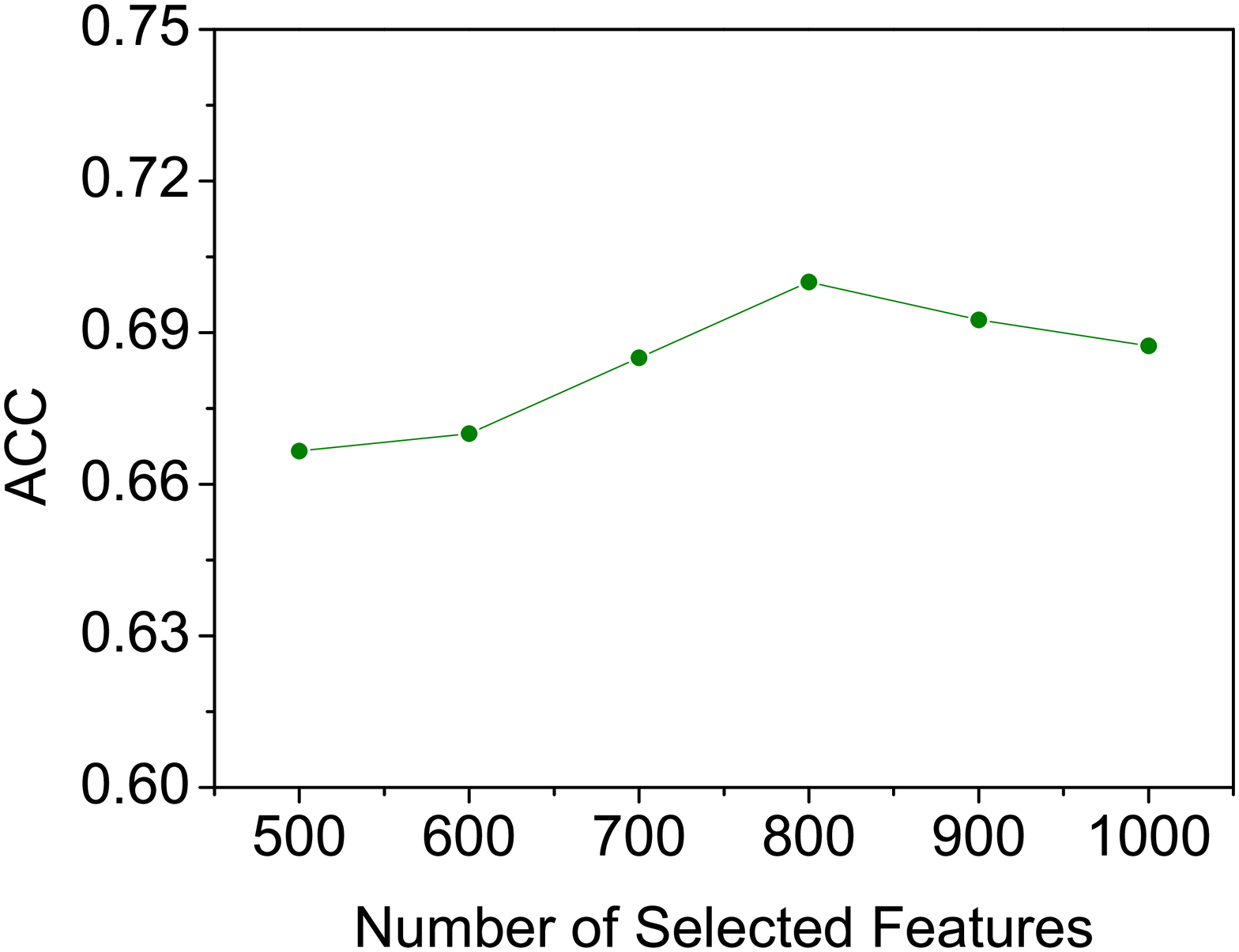}}
\subfigure[]{
\includegraphics[scale=0.18]{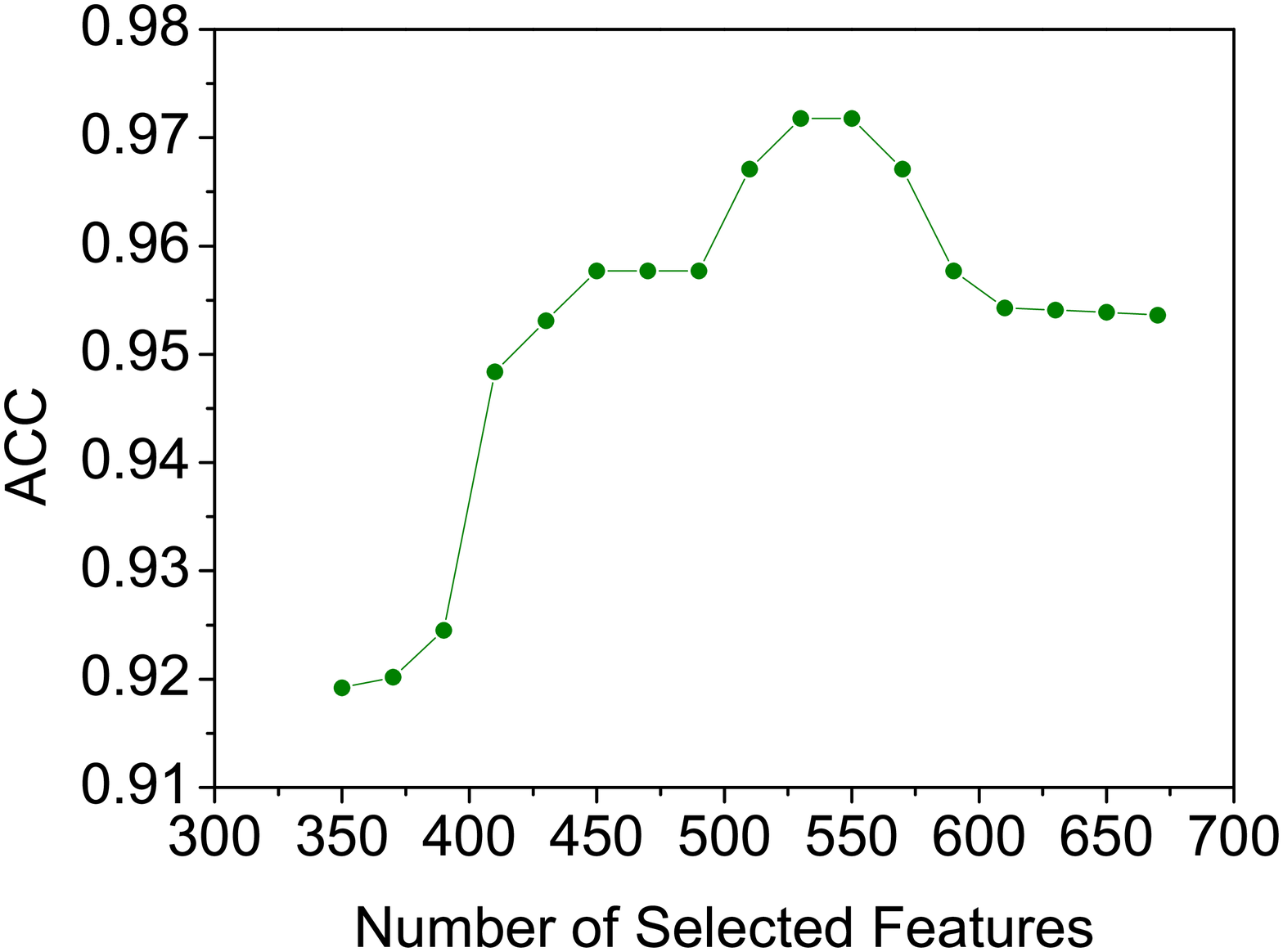}}
\subfigure[]{
\includegraphics[scale=0.18]{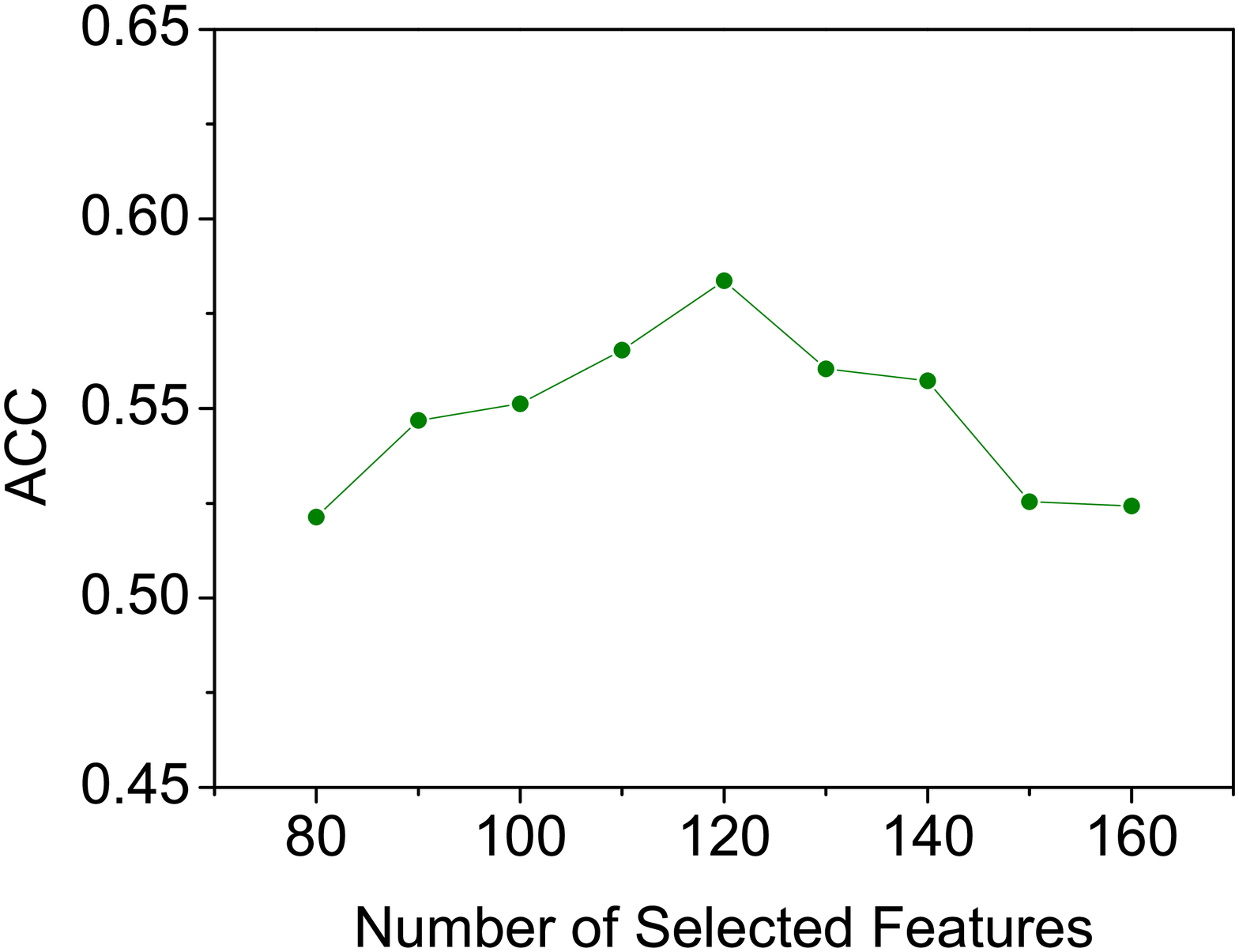}}
\subfigure[]{
\includegraphics[scale=0.18]{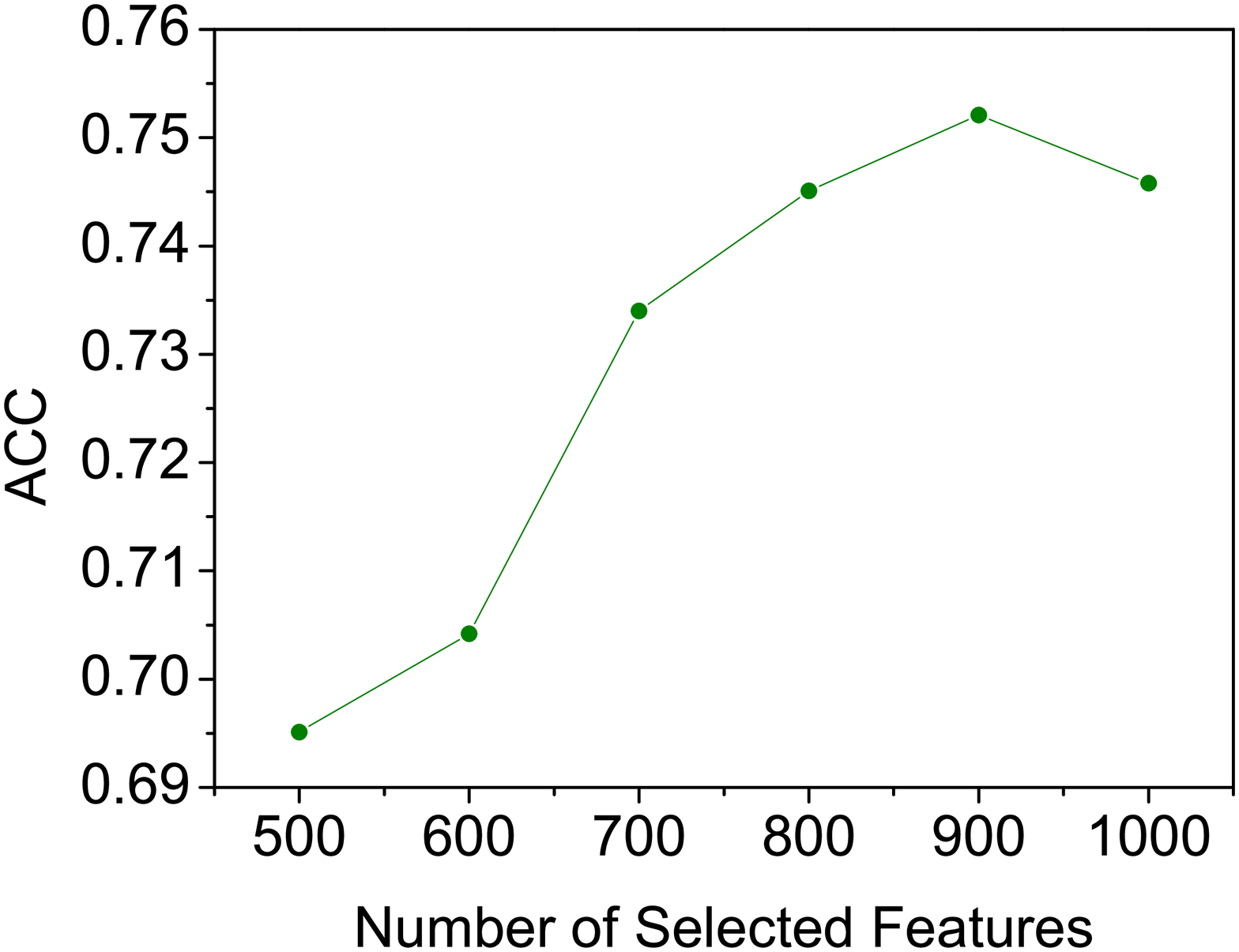}}
\subfigure[]{
\includegraphics[scale=0.18]{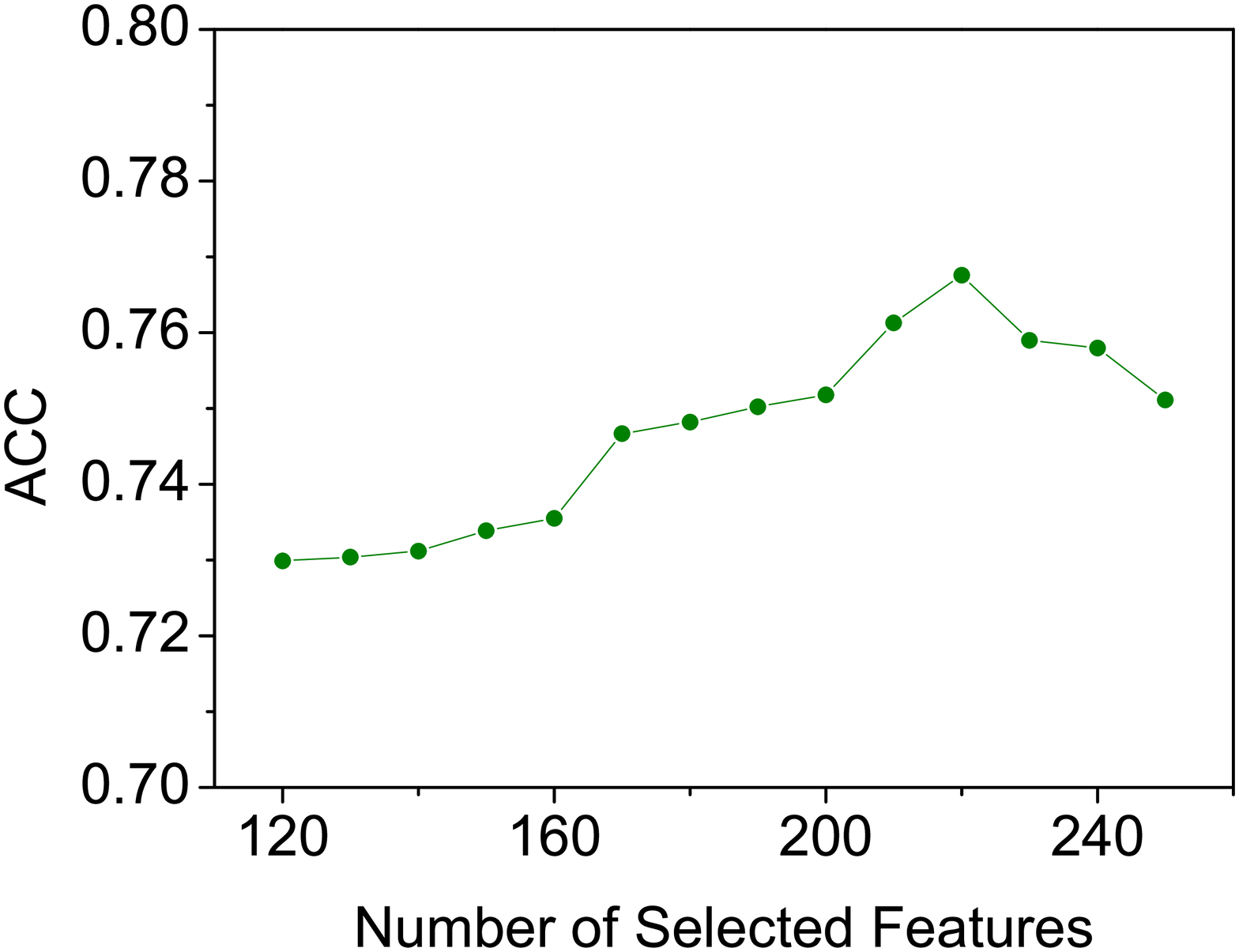}}
\caption{Performance variation w.r.t the number of selected features when we fix $\alpha$ and $\beta$ at 1 using the proposed algorithm. From this figure, we have the following observations: (1) When the number of selected features is too small, the clustering ACC is not competitive with using all features without feature selection. (2) As the number of selected features increases, the clustering ACC rises before its peak in general on all the used datasets. (3) The trend of clustering ACC are varying when different datasets are used. (4) With all the features used, the clustering ACC are generally lower than the peak level on all the datasets. (a) YaleB (b)ORL (c) Jaffe (d) HumanEva (e) Coil20 (f) USPS} 
\label{featureselection}
\end{figure*}

\subsection{Influence of Selected Features}
As the goal of feature selection is to boost accuracy and computation efficiency, experiments are conducted to learn how the number of selected features can affect the clustering performance. From these experiments we can see the general trade-off between performance and computational efficiency over all the used dataset.

Fig. \ref{featureselection} shows the performance variance with right to the number of selected features in terms of clustering ACC. From the results, we have the following observations: 
\begin{enumerate}
\item When the number of selected features is too small, the clustering ACC is not competitive with using all features without feature selection, which is mainly caused by too much information loss. For example, when only 500 features are selected on YaleB, the clustering ACC is relatively low, at only 0.164.
\item As the number of selected features increases, the clustering ACC rises before its peak in general on all the used datasets. How many features are selected to get the peak level is different on different datasets.
\item The trend of clustering ACC are varying when different datasets are used. For example, the clustering ACC keeps stable from using 800 features to using 1000 features for YaleB while drops for the other used datasets. The different variance shown on the six datasets are supposed to be related to the properties of the datasets.
\item After all the features are used (without feature selection), the clustering ACC are generally lower than the peak level on all the datasets. We can safely conclude that as the clustering ACC increases, the proposed algorithm is capable of reducing noise and selecting the most discriminating features.
\end{enumerate} 

\begin{figure*}[!ht]
\centering
\subfigure[]{
\includegraphics[scale=0.13]{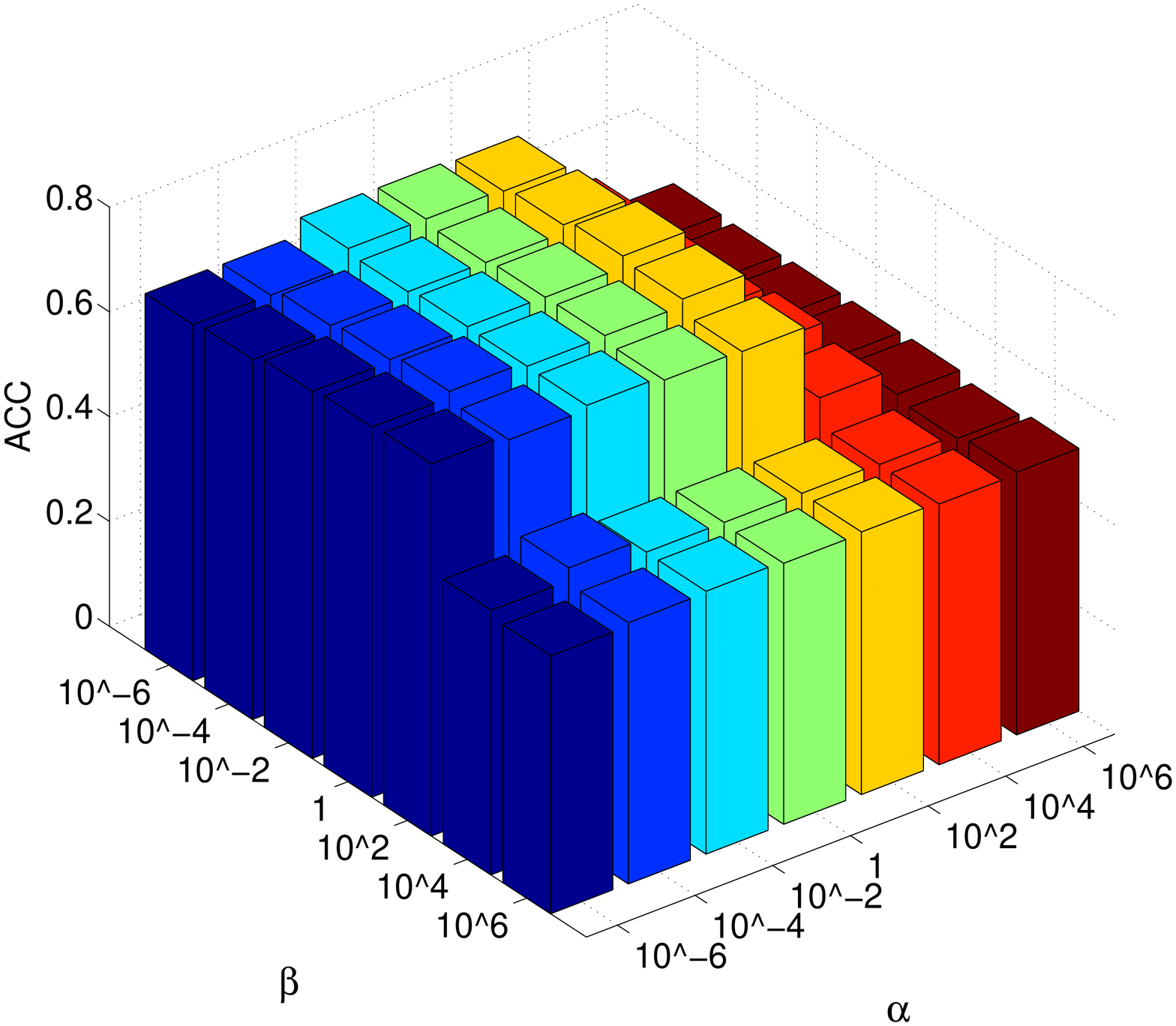}}
\subfigure[]{
\includegraphics[scale=0.13]{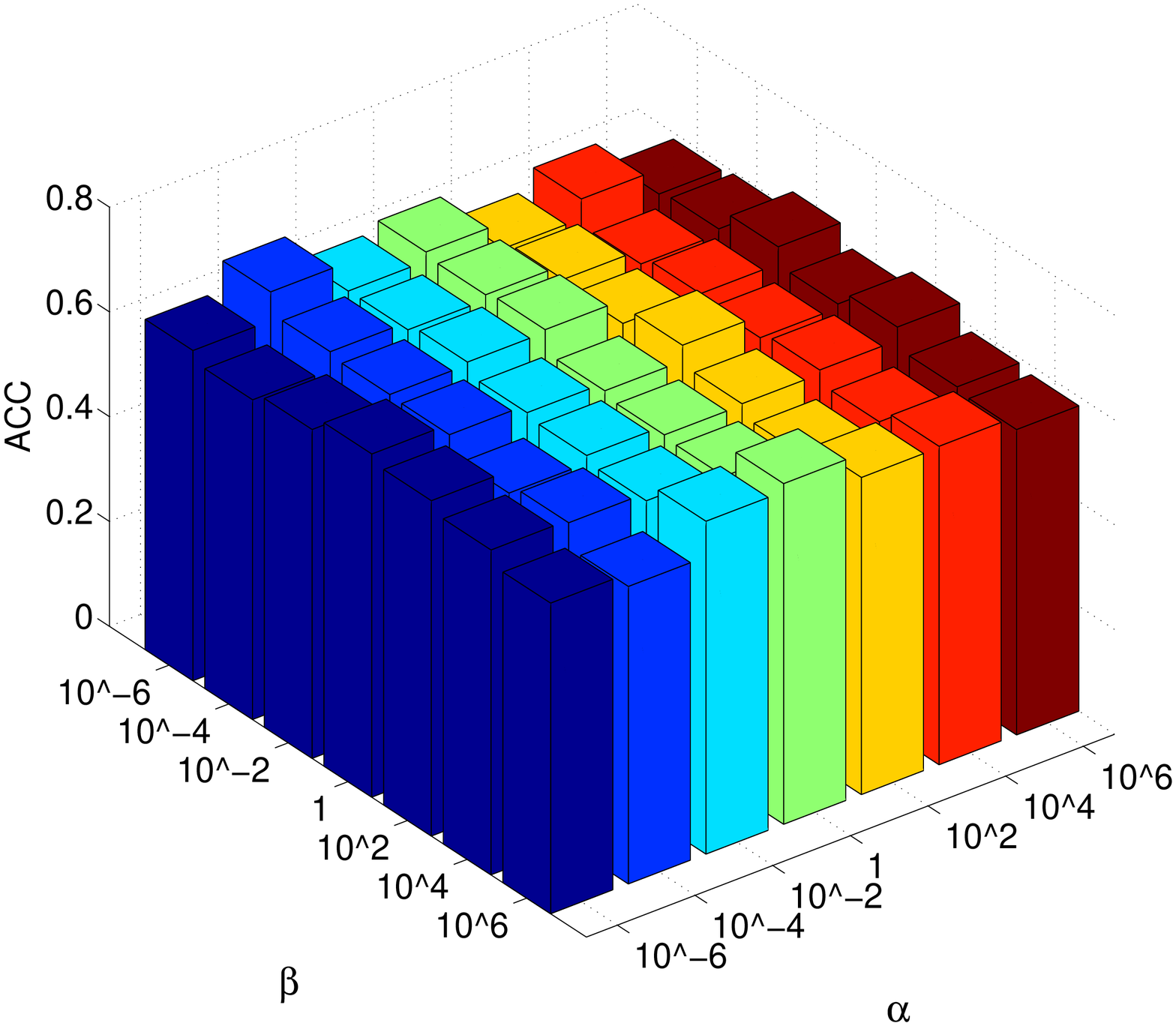}}
\subfigure[]{
\includegraphics[scale=0.13]{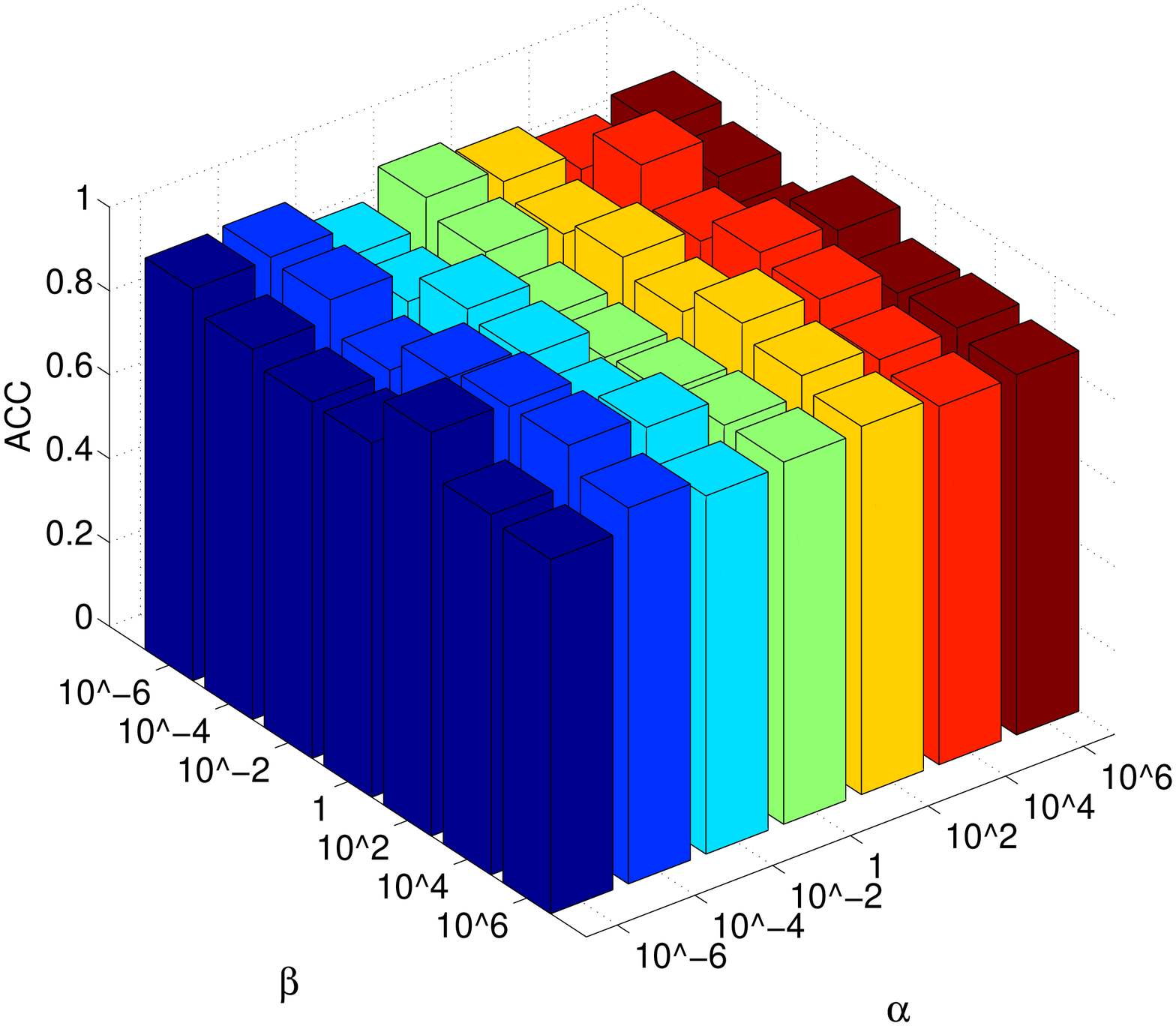}}
\subfigure[]{
\includegraphics[scale=0.13]{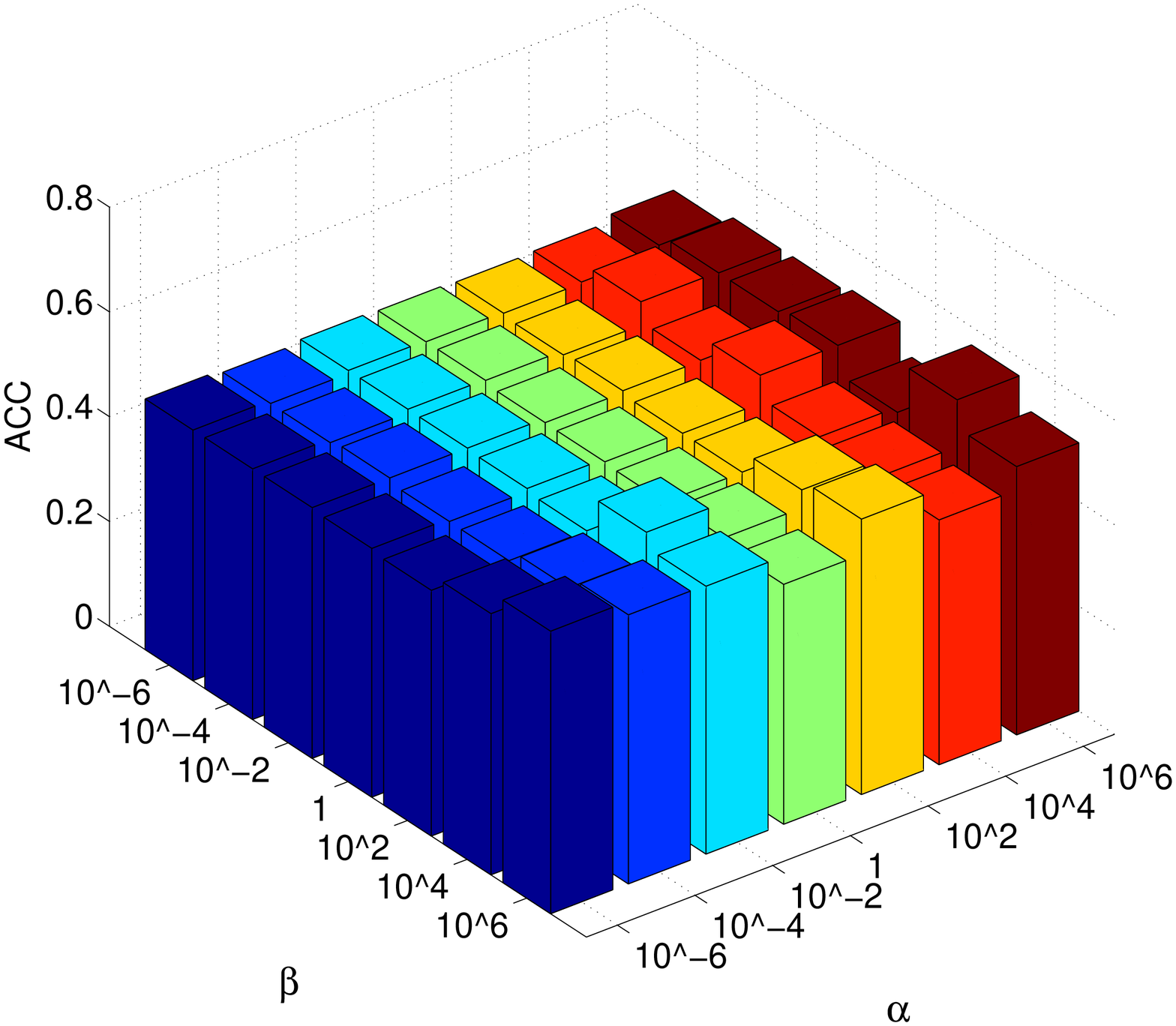}}
\subfigure[]{
\includegraphics[scale=0.13]{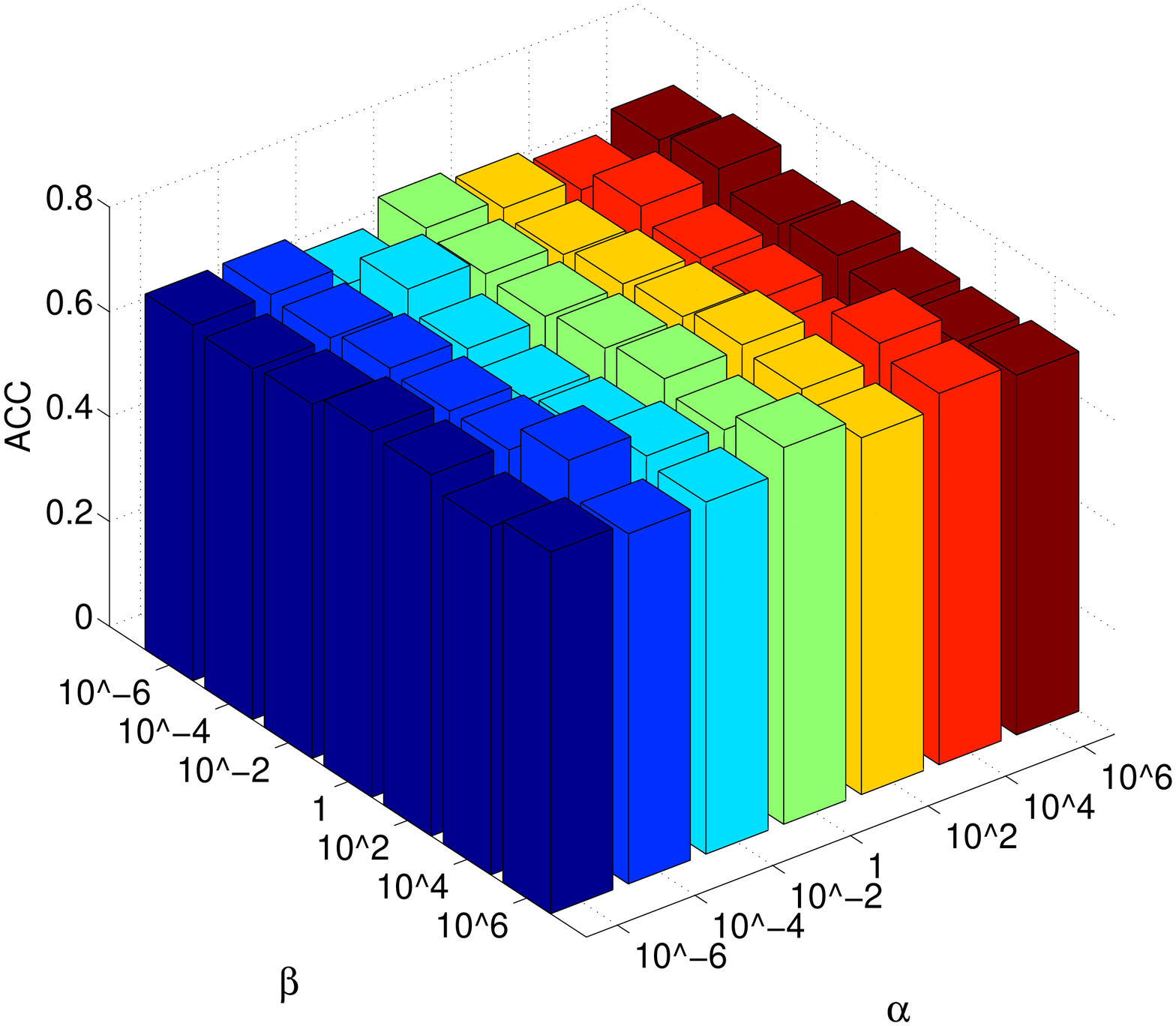}}
\subfigure[]{
\includegraphics[scale=0.13]{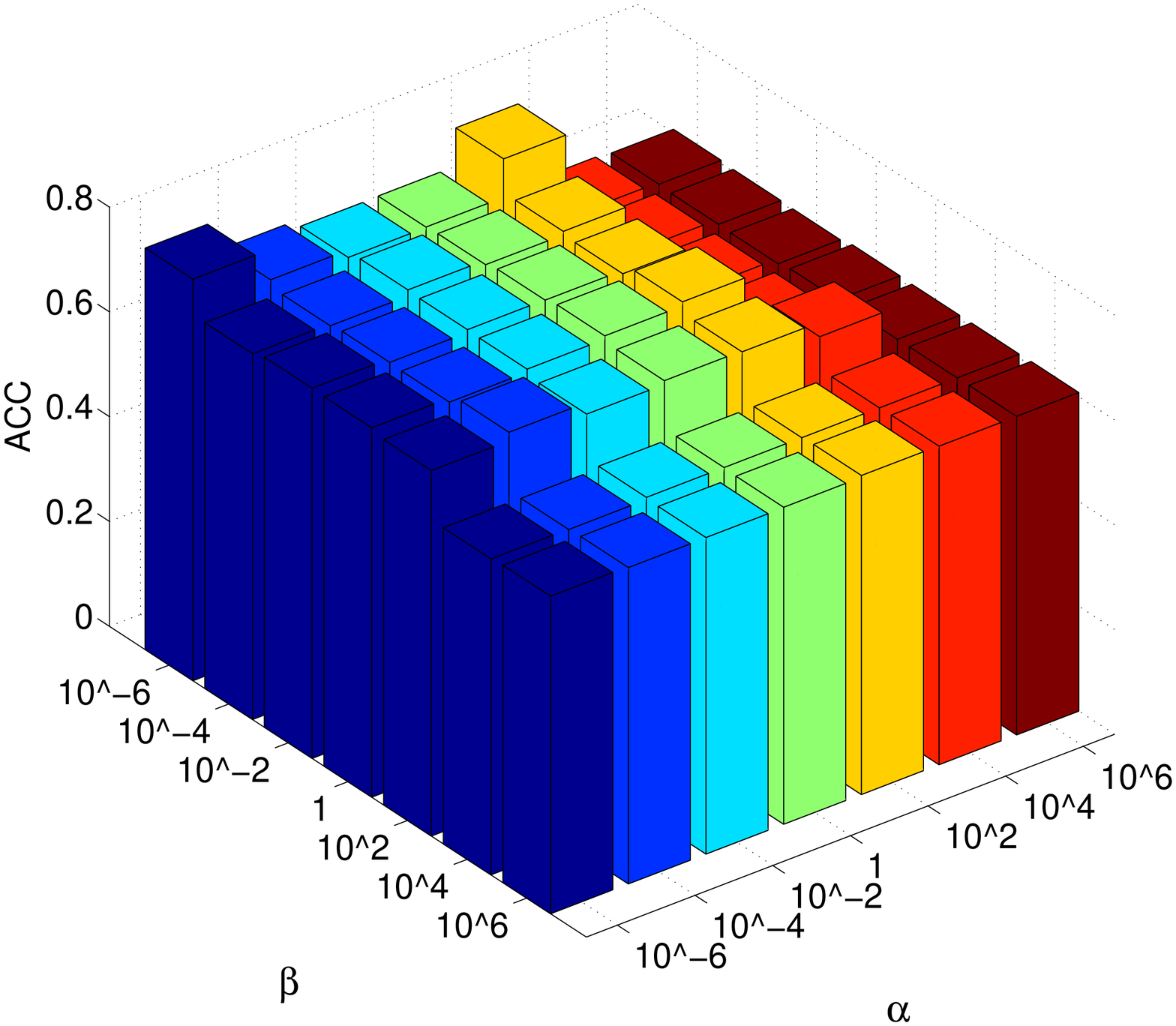}}
\caption{Performance variation w.r.t $\alpha$ and $\beta$ when we fix the number of selected features for clustering. This figure shows different clustering performance when using different values of $\alpha$ and $\beta$. The impact of different combinations of regularization parameters are supposed to be related to the individual properties of the datasets. On the used datasets, we can observe that better experimental results are obtained when the two regularization parameters $\alpha$ and $\beta$ (a) YaleB (b)ORL (c) Jaffe (d) HumanEva (e) Coil20 (f) USPS.} 
\label{ParameterSensitivity}
\end{figure*}
\subsection{Parameter Sensitivity}

Our proposed algorithm involves two regularization parameters, which are denoted as $\alpha$ and $\beta$ in Eq. \eqref{finalobj}. It is beneficial to learn how they influence the feature selection and consequently the performance on clustering. In this section, we conduct several experiments on the parameter sensitivity. We use the clustering ACC to reflect the performance variation.

Fig. \ref{ParameterSensitivity} demonstrates the clustering ACC variation w.r.t $\alpha$ and $\beta$ on the six datasets. From this figure, we learn that the clustering performance changes corresponding to different combinations of $\alpha$ and $\beta$. The impact of different combinations of regularization parameters are supposed to be related to the individual properties of the datasets. On the used datasets, we can observe that better experimental results are obtained when the two regularization parameters $\alpha$ and $\beta$ are comparable. 

\subsection{Performance Variance w.r.t Different Initializations}

In this section, experiments are conducted to evaluate how performance varies when performance variance w.r.t different initializations. Clustering ACC is also used to reflect the performance variation. The Kmeans algorithm has adopted the same initialization. We conduct different initializations, including setting all the diagonal elements of $W$ to 0.5 (1st initialization), 1 (2nd initialization), 2 (3rd initialization), setting all the elements of $W$ to 0.5 (4th initialization), 1 (5th initialization), 2 (6th initialization) and random values (7th initialization). The experimental results are shown in Table \ref{initialization}.

From the experimental results, we can observe that the proposed algorithm always obtains global optima w.r.t different initializations.

\begin{table*}[!ht]
\small
\renewcommand{\arraystretch}{1.3}
\caption{Performance variance w.r.t different initializations, including setting all the diagonal elements of $W$ to 0.5, 1, 2, and setting all the elements of $W$ to 0.5, 1, 2 and random values. In this experiment, the K-means clustering algorithm has adopted the same initialization. It can be seen that our algorithm always converges to the global optima regardless of the different initializations.}
\centering
\begin{tabular}{|c||c|c|c|c|c|c|}
\hline
  &  YaleB &  ORL &  JAFFE & HumanEVA &  Coil20 &  USPS \\
\hline \hline
1st initialization & $19.3$ & $71.3$ & $97.2$ & $56.4$ & $75.2$ & $76.9$ \\
\hline
2nd initialization & $19.3$ & $71.3$ & $97.2$ & $56.4$ & $75.2$ & $76.9$  \\
\hline
3rd initialization & $19.3$ & $71.3$ & $97.2$ & $56.4$ & $75.2$ & $76.9$  \\
\hline
4th initialization & $19.3$ & $71.3$ & $97.2$ & $56.4$ & $75.2$ & $76.9$  \\
\hline
5th initialization & $19.3$ & $71.3$ & $97.2$ & $56.4$ & $75.2$ & $76.9$ \\
\hline
6th initialization & $19.3$ & $71.3$ & $97.2$ & $56.4$ & $75.2$ & $76.9$ \\
\hline
7th initialization & $19.3$ & $71.3$ & $97.2$ & $56.4$ & $75.2$ & $76.9$  \\
\hline
\end{tabular}
\label{initialization}
\end{table*}

\begin{figure*}[!ht]
\centering
\subfigure[]{
\includegraphics[scale=0.18]{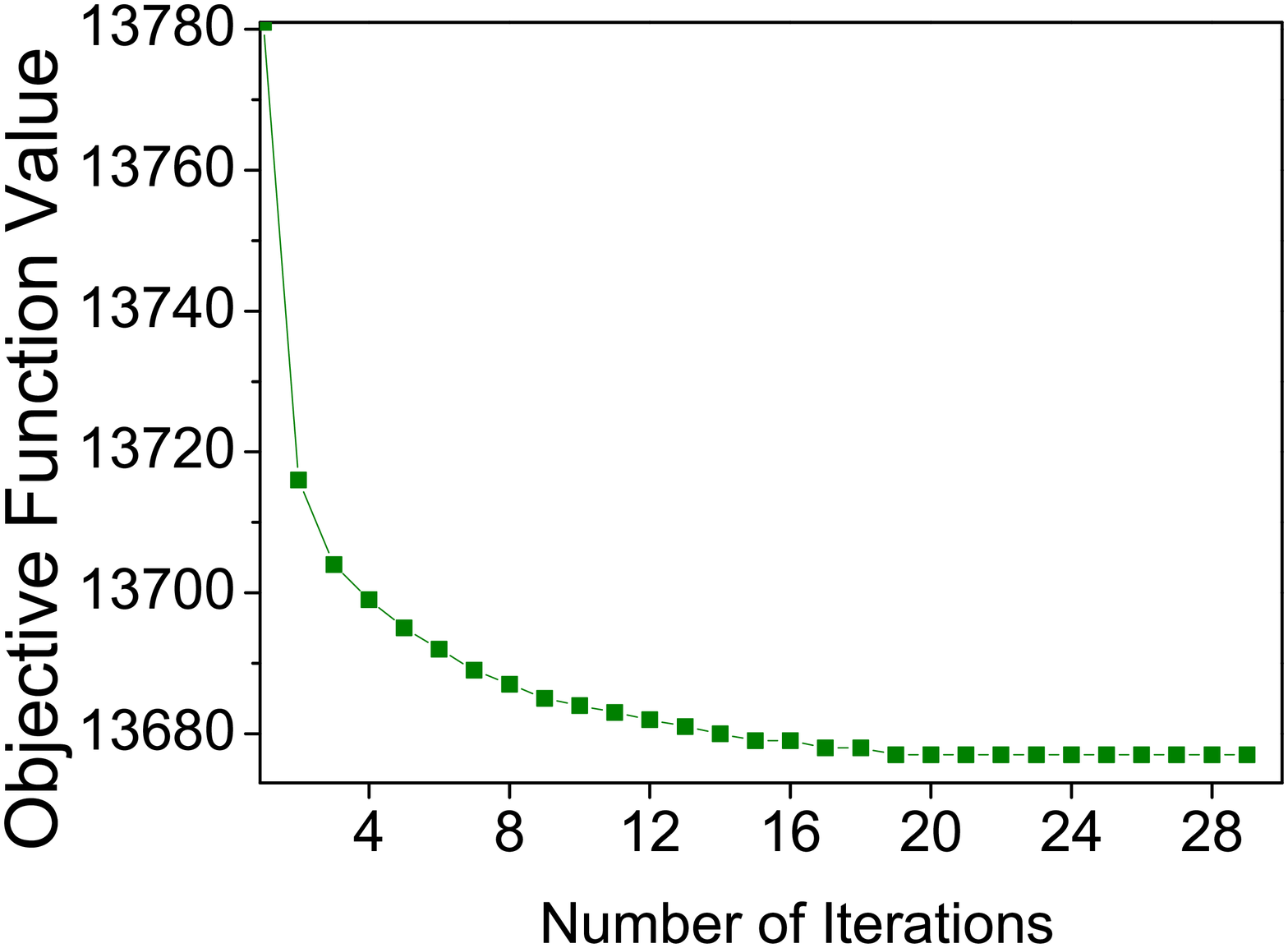}}
\subfigure[]{
\includegraphics[scale=0.18]{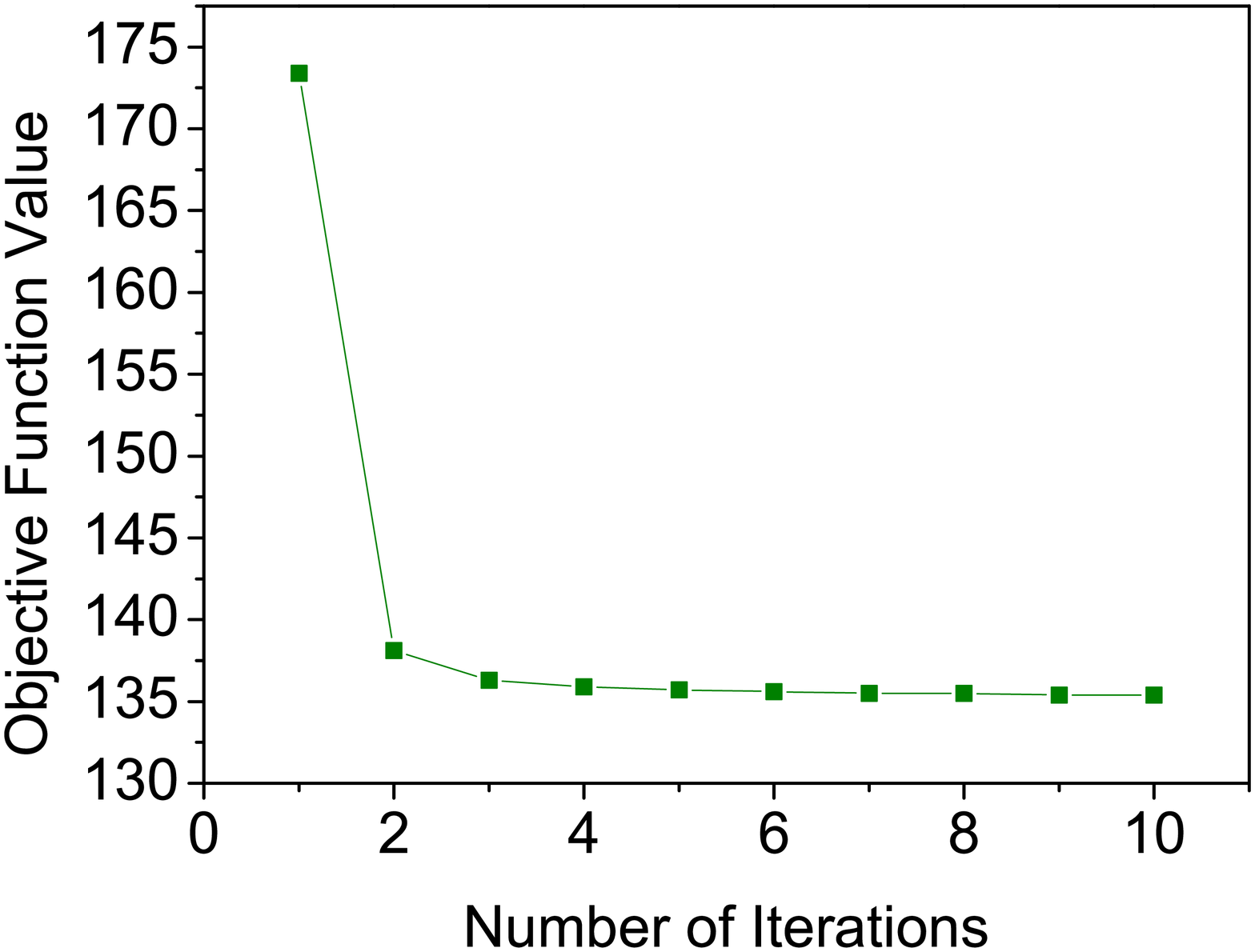}}
\subfigure[]{
\includegraphics[scale=0.18]{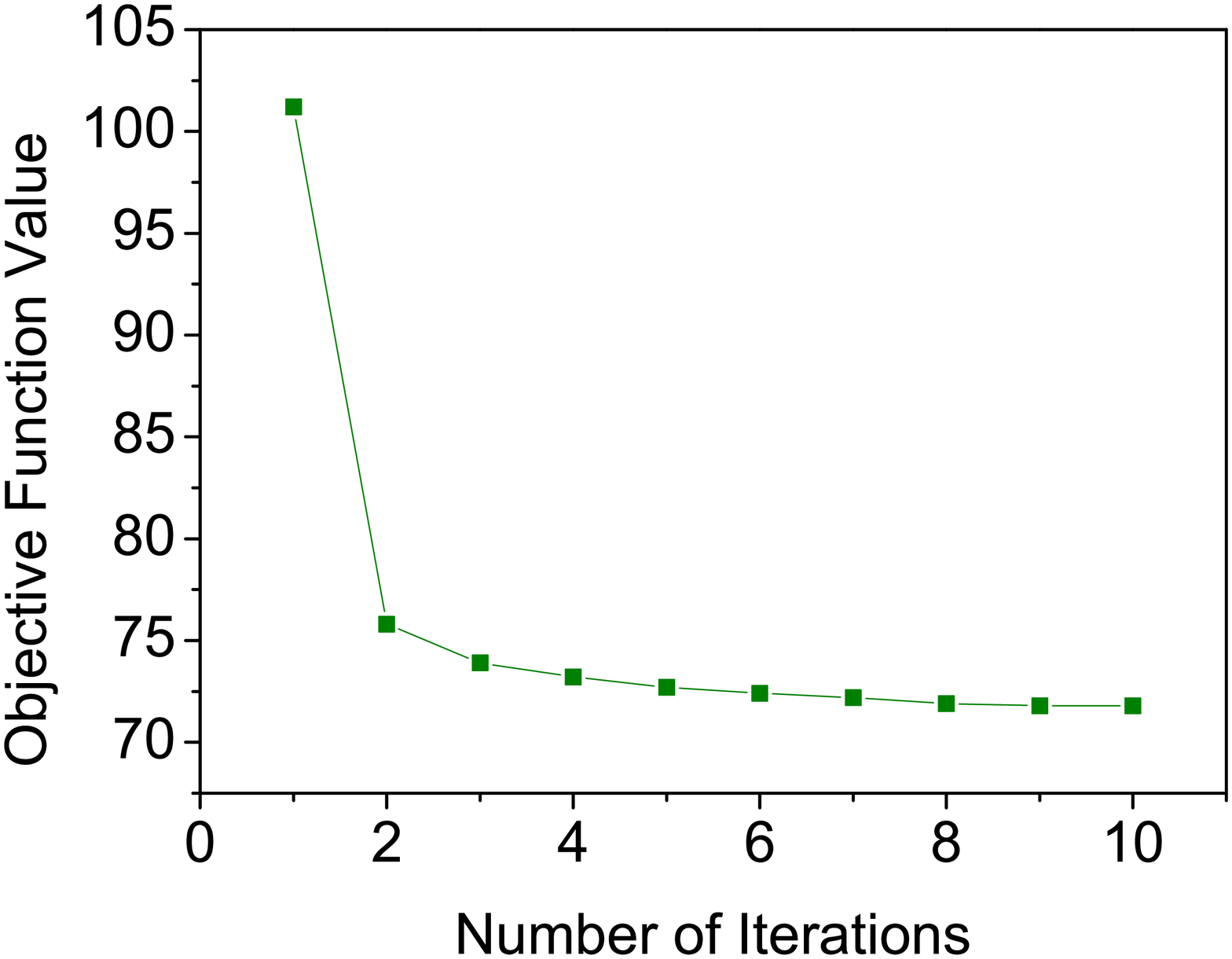}}
\subfigure[]{
\includegraphics[scale=0.18]{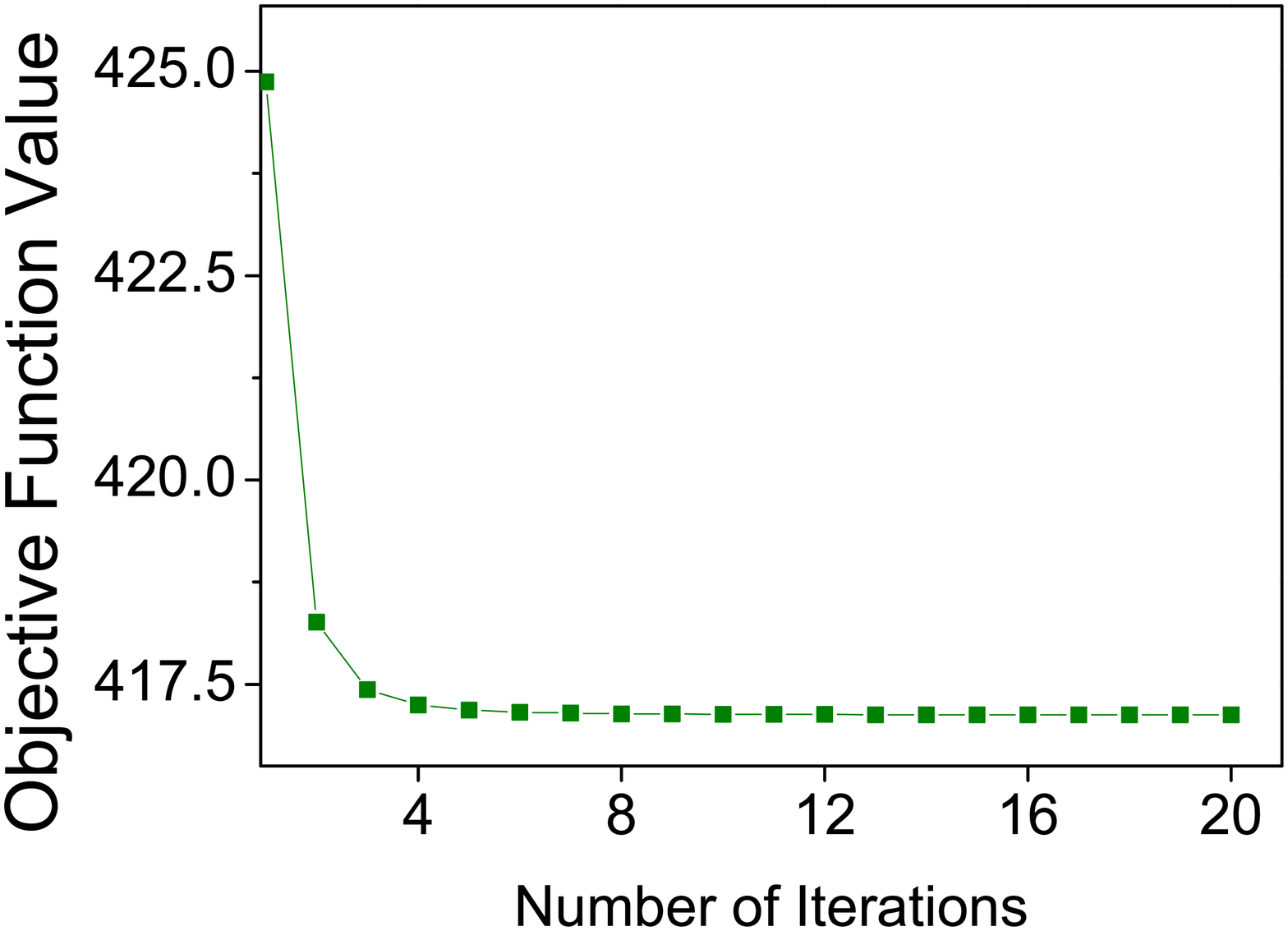}}
\subfigure[]{
\includegraphics[scale=0.18]{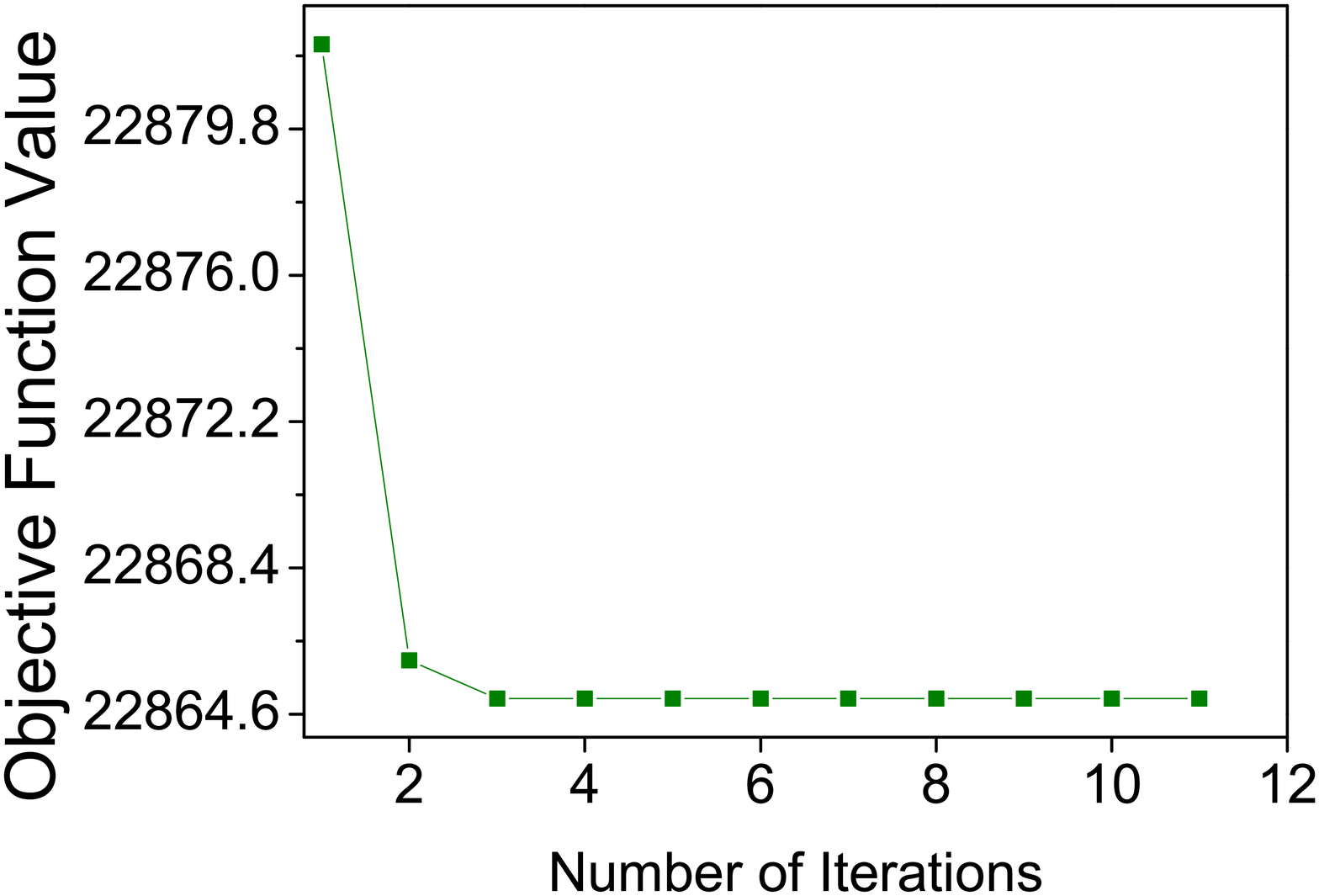}}
\subfigure[]{
\includegraphics[scale=0.18]{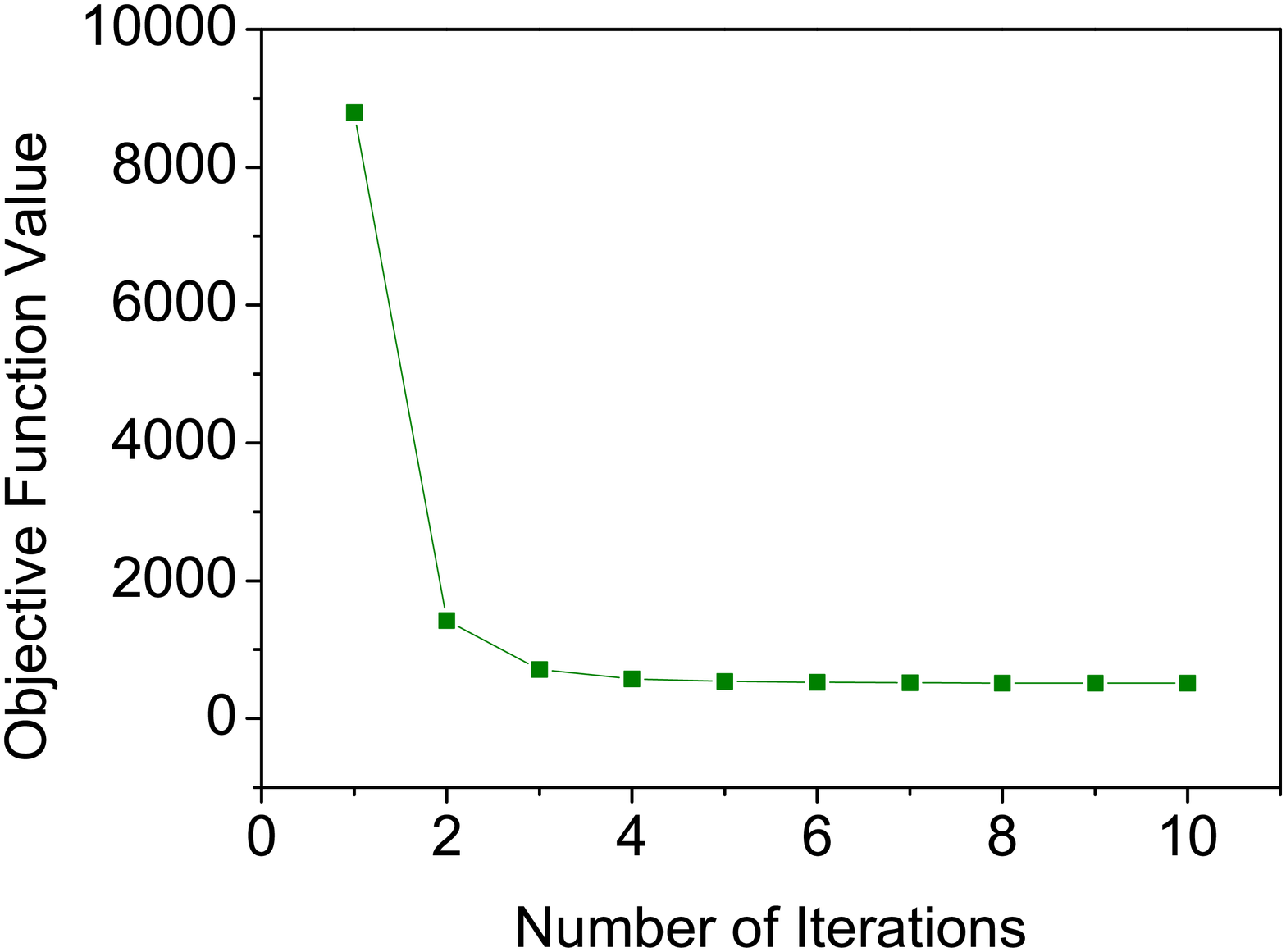}}
\caption{Convergence curves of the objective function value in Eq. \eqref{finalobj} using Algorithm 1. The figure indicates that the objective function value monotonically decreases until convergence by utilizing the proposed algorithm. (a) YaleB (b)ORL (c) Jaffe (d) HumanEva (e) Coil20 (f) USPS.} 
\label{Convergence}
\end{figure*}

\subsection{Convergence Study}
In the previous section, we have proven that the objective function in Eq. \eqref{finalobj} monotonically decreases by using the proposed algorithm. It is interesting to learn how fast our algorithm converges. In this section, we conduct several experiments on validate the convergence of the proposed algorithm. We fix the two regularization parameters $\alpha$ and $\beta$ at 1, which is the median value of the range from which the regularization parameters are tuned.

Fig. \ref{Convergence} shows the convergence curves of the proposed algorithm according to the objective function value in Eq. \eqref{finalobj}. From these figures, we can observe that the objective function value converges quickly. To be more specific, the proposed algorithm can converge within 10 iterations on all the used datasets, which is very efficient.

\section{Conclusion}

In this paper, we have proposed a novel convex sparse PCA and applied it to feature analysis.
We first prove that PCA can be formulated as a low-rank regression optimization problem. We further incorporate the $l_{2,1}$-norm minimization into the proposed algorithm to make the regression coefficients sparse and make the model robust to the outliers. Different from state-of-the-art robust PCA, the proposed algorithm is capable of solving out-of-sample problems. Additionally, we propose an efficient algorithm to optimize the objective function.

To validate the performances of our algorithm for feature analysis, we conduct experiments on six real-world datasets on clustering. It can be seen from the experimental reuslts that the proposed algorithm outperforms the other state-of-the-art unsupervised feature selection as well as the baseline using all features. Therefore, we conclude that the proposed algorithm is a robust sparse feature analysis method, and its benefits make it especially suitable for feature selection.


%

\ifCLASSOPTIONcaptionsoff
  \newpage
\fi

{
\bibliographystyle{IEEEtran}
\bibliography{SPCA}
}

\end{document}